\newtheorem{thm}{Theorem}
\newtheorem{dfn}{Definition}
\newtheorem{prop}{Proposition}
\newtheorem{rem}{Remark}
\newcommand{\RR}{{\mathbb R}}
\newcommand{\norm}[1]{\lVert#1\rVert}
\newcommand{\Rg}{{\mathbb R^{\text{dim } \mathfrak g}}}
\newcommand{\dv}[2]{{\frac{\partial #1}{\partial #2}}}
\newcommand{\dotex}{{\frac{d}{dt}}}
\DeclareMathOperator{\Cov}{Cov}
\title{\LARGE \bf
An EKF-SLAM algorithm with consistency properties}
\author{Axel Barrau, Silv\`ere Bonnabel
\thanks{A. Barrau is with SAFRAN TECH, Groupe Safran, Rue des
Jeunes Bois - Chateaufort, 78772 Magny Les Hameaux CEDEX.  S. Bonnabel are with MINES ParisTech, PSL Research University, Centre for robotics, 60 Bd St Michel 75006 Paris, France
       {\tt\footnotesize [axel.barrau,silvere.bonnabel]@mines-paristech.fr}}%
}
\begin{document}

\maketitle

\thispagestyle{empty}

\pagestyle{empty}


\begin{abstract}In this paper we address the inconsistency of the EKF-based SLAM algorithm that stems from non-observability of  the origin and orientation of the global reference frame. We prove on the non-linear two-dimensional problem with point landmarks observed that this type of inconsistency is remedied using the Invariant EKF, a recently introduced variant of the EKF meant to account for the symmetries of the state space. Extensive Monte-Carlo runs illustrate the theoretical results.
\end{abstract}
\section{Introduction}

The problem of simultaneous localization and mapping (SLAM) has a rich history over the past two decades, which is too broad to cover here, see e.g. \cite{dissanayake-2001,durrant2006simultaneous}. The extended Kalman filter (EKF) based SLAM (the EKF-SLAM) has played an important historical role, and is still   used, notably for its ability to close loops thanks to the maintenance of correlations between remote landmarks. 

The fact that the EKF-SLAM is inconsistent (that is, it returns a covariance matrix that is too optimistic, see e.g., \cite{Bar-Shalom}, leading to inaccurate estimates) was early noticed \cite{julier2001counter} and has since been explained in various papers  \cite{castellanos2004limits,
bailey2006consistency,
huang2008analysis,
huang2010observability,
huang2007convergence,huang2011observability}. In the present paper we consider the inconsistency issues that stem from the fact that, as only relative measurements are available, the origin and orientation of the earth-fixed frame can never be correctly estimated, but the EKF-SLAM tends to ``think" it can estimate them as its output covariance matrix reflects an information gain in those directions of the state space. This lack of observability, and the poor ability of the EKF to handle it, is notably regarded as the root cause of inconsistency in \cite{huang2010observability,huang2011observability} (see also references therein). In the present paper we advocate the use of the Invariant (I)-EKF to prevent covariance reduction in directions of the state space where no information is available.


The Invariant extended Kalman filter (IEKF) is a novel methodology  introduced in \cite{bonnabel-cdc,bonnabel-martin-salaun-cdc09} that consists in slightly modifying the EKF equations to have them respect the geometrical structure of the problem. Reserved to systems defined on Lie groups, it has been mainly driven by applications to localization and guidance, where it appears as a slight modification of the multiplicative EKF (MEKF), widely known and used in the world of aeronautics. It has  been
proved to possess theoretical local convergence properties the EKF lacks in \cite{barrau2014invariant}, to be an improvement over the EKF in practice (see e.g., \cite{barczyk2013invariant,barczyk2015invariant,diemer2014invariant,martin2010generalized} and more recently \cite{barrau2014invariant} where the EKF is outperformed), and has been successfully implemented in  industrial  applications to navigation (see the patent \cite{brevet_alignement}).

In the present paper, we slightly generalize the IEKF framework, to make it capable to handle very general observations (such as range and bearing or bearing only observations), and we show how the derived IEKF-SLAM, a simple variant of the EKF-SLAM, allows remedying the inconsistency of EKF-SLAM stemming from the non-observability of the orientation and origin of the global frame.

\subsection{Links and differences with previous literature}

The issue of EKF-SLAM inconsistency has been the object of many papers,  see 
\cite{julier2001counter, castellanos2004limits,
bailey2006consistency,
huang2007convergence} to cite a few, where empirical evidence (through Monte-Carlo simulations) and theoretical explanations in various particular situations have been accumulated. In particular, the insights of \cite{bailey2006consistency,
huang2007convergence} have been that the orientation uncertainty is a key feature in the inconsistency. The article \cite{huang2007convergence}, in line with \cite{julier2001counter, castellanos2004limits,martinel,
bailey2006consistency},  also underlines the importance of the linearization process, as linearizing about the true trajectory solves the inconsistency issues, but is impossible to implement in practice as the true state is unknown. It derives a relationship that should hold between various Jacobians appearing in the EKF equations when they are evaluated at the current state estimate to ensure consistency. 

A little later, the works of G.P. Huang, A.I. Mourikis, and S. I. Roumeliotis  \cite{huang2008analysis,
huang2010observability,huang2011observability} have provided a sound theoretical analysis of the EKF-SLAM inconsistency as caused by the EKF inability to correctly reflect the three unobservable degrees of freedom (as an overall rotation and translation of the global reference frame leave all the measurements unchanged).  Indeed, the filter tends to erroneously acquire information along the directions spanned by those unobservable transformations. To remedy this problem, the above mentioned authors have proposed various solutions, the most advanced being the Observability Constrained (OC)-EKF. The idea is to  pick a linearization point that is such that the unobservable subspace ``seen" by the EKF system model is of appropriate dimension, while minimizing the expected errors of the linearization points. 

Our approach, that relies on the IEKF, provides an interesting alternative to the OC-EKF, based on a quite different route. Indeed, the rationale is to apply the EKF methodology, but using alternative estimation errors to the standard linear difference between the estimate and the true state. Any non-linear error that reflects a discrepancy between the true state and the estimate, necessarily defines a local frame around any point,  and the idea underlying the IEKF amounts  to write the Kalman Jacobians and covariances in this frame. We notice and prove here that an alternative nonlinear error  defines a local frame where the unobservable subspace is \emph{everywhere} spanned by the same vectors. Using this local frame at the current estimate to express Kalman's covariance matrix will be shown to ensure the unobservable subspace ``seen" by the EKF system model is \emph{automatically} of appropriate dimension. 

We thus obtain an EKF variant  which automatically comes with consistency properties. Moreover, we relate unobservability to the inverse of the covariance matrix (called information matrix) rather than on the covariance matrix itself, and we derive guarantees of information decrease over unobservable directions. Contrarily to the OC-EKF, and as in the standard EKF, we use here  the latest, and thus best, state estimate as the linearization point to compute the filter Jacobians. 

In a nutshell, whereas the key fact for the analysis of \cite{huang2010observability} is that the choice of the linearization point affects the observability properties of the linearized state error system of the EKF,  the key fact for our analysis is that the choice of the error variable has similar consequences. Theoretical results and simulations underline the relevance of the proposed approach. 

Robot-centric formulations such as   \cite{castellanos}, and later  \cite{Guerreiro,Lourenco} are promising attempts to tackle unobservability, but they unfortunately lack convenience as  the  position of all the landmarks must be revised during the propagation step, so that the landmarks' estimated position becomes in turn sensitive to the motion sensor's noise. They do not provably solve the observability issues considered in the present paper, and it can be noted the OC-EKF has demonstrated better experimental performance than the robocentric mapping filter,  in \cite{huang2010observability}. In particular, the very recent papers \cite{Guerreiro,Lourenco} propose to write the equations of the SLAM in the robot's frame under a constant velocity assumption. Using an output injection technique, those equations become linear, allowing to  prove global asymptotic convergence of any linear observer for the corresponding deterministic linear model. This is fundamentally a deterministic approach and property, and as the matrices appearing in the obtained linear model are functions of the observations, the behavior of the filter is not easy to anticipate in a noisy context: The observation noise thus corrupts the very propagation step of the filter.

Some recent papers also propose to improve consistency through local map joining, see \cite{Zhao} and references therein. Although   appealing, this approach is rather oriented towards large-scale maps, and requires the existence of local submaps.  But when using submap  joining  algorithm,  ``inconsistency  in  even  one
of the submaps, leads to an inconsistent global map" \cite{huang2008submap}. This approach  may thus prove complementary, if the  IEKF SLAM proposed in the present paper is used to build consistent submaps.  Note that, the IEKF SLAM can also  be readily combined with other measurements such as the GPS, whereas the submap approach is  tailored for  pure  SLAM. 

From a methodology viewpoint, it is worth noting our approach does  {not} bring to bear estimation errors written in a robot frame, as \cite{castellanos,Guerreiro,Lourenco,Zhao}. Although based on symmetries as well, the estimation errors we use  are slightly more complicated.

Finally, nonlinear optimization techniques have become popular for SLAM recently, see e.g., \cite{Dellaert} as one of the first papers. Links between our approach, and those novel methods are discussed in the paper's conclusion.

\subsection{Paper's  organization}

The paper is organized as follows. In Section \ref{Sec:1}, the standard EKF equations and EKF-SLAM algorithm are reviewed. In Section \ref{Sec:2}  we recall the problem  that neither the origin nor the  orientation of the global frame are observable, but the EKF-SLAM systematically tends to ``think" it observes them, which leads to inconsistency. In Section \ref{Sec:22} we introduce the IEKF-SLAM algorithm. In Section \ref{Sec:3}  we show how the linearized model of the IEKF always correctly captures the considered unobservable directions. In Section \ref{sect::tools} we derive a property of the covariance matrix output  by the filter that can be interpreted in terms of Fisher information. In Section \ref{Sec:4}   simulations support the theoretical results and illustrate the benefits of the proposed algorithm.   Finally,   the IEKF theory of \cite{barrau2014invariant} is briefly recapped in the appendix, and the IEKF SLAM shown to be an application of this theory indeed. The equations of the IEKF SLAM in 3D are then also derived applying the general theory.

\section{The EKF-SLAM algorithm}
\label{Sec:1}

\subsection{Statement of the general  standard EKF equations}\label{gene:sec}

Consider a general dynamical system in discrete time with state $X_n \in \mathbb{R}^N$ associated to a sequence of observations $(Y_n)_{n \geqslant 0}\in\RR^p$. The equations are as follows:
\begin{equation}
\label{eq::general_dynamical_system}
X_{n} = f(X_{n-1},u_n,w_n),
\end{equation}
\begin{equation}\label{eq::general_dynamical_system_output}
Y_n = h(X_n)+V_n,
\end{equation}
where $f$ is the function encoding the evolution of the system, $w_n$ is the process noise, $u_n$ an input, $h$ the observation function and $V_n$ the measurement noise.

The EKF propagates the estimate $\hat X_{n-1|n-1}$ obtained after the  observation $Y_{n-1}$, through the deterministic part of \eqref{eq::general_dynamical_system}:
\begin{equation}\label{eq::obs_gen}
\hat X_{n|n-1} = f(\hat X_{n-1|n-1},u_n,0)\end{equation}
 The update of $\hat X_{n |n-1}$ using the new observation $Y_n$ is based on the first-order approximation of the non-linear system \eqref{eq::general_dynamical_system}, \eqref{eq::general_dynamical_system_output} around the estimate $\hat X_n $, with respect to  the estimation errors $e_{n-1|n-1},e_{n|n-1}$ defined as:
\begin{align}\label{err:elf}
e_{n-1|n-1} = X_{n-1} - \hat X_{n|n-1},\quad e_{n|n-1} = X_n - \hat X_{n|n-1}
\end{align}
Using the Jacobians $F_n=\dv{f}{X}(\hat X_{n-1|n-1},u_n,0)$, $G_n=\dv{f}{w}(\hat X_{n-1|n-1},u_n,0)$, and $H_n=\dv{h}{X}(\hat X_{n|n-1})$, the combination of equations \eqref{eq::general_dynamical_system}, \eqref{eq::general_dynamical_system_output} and  \eqref{eq::obs_gen}  yields the following first-order expansion of the error system
\begin{align}
e_{n|n-1} &= F_n e_{n-1|n-1} + G_n  w_n,\label{def:A_linear}\\
Y_n - h(\hat X_{n|n-1}) &= H_n  e_{n|n-1} + V_n,\label{def:H_linear}
\end{align}
where the second order terms, that is, terms  of order $O\left(\norm{ e}^2,\norm{w}^2, \norm{e} \norm{w}\right) $ have been removed according to the standard  way the EKF handles non-additive noises in the model (see e.g., \cite{Stengel} p. 386). Using the linear Kalman equations with $F_n,G_n,H_n,$ the gain $K_n$ is computed, and letting $z_n=Y_n - h(\hat X_{n|n-1})$, an estimate $e_{n|n}=K_nz_n$ of the error $X_n-\hat X_{n|n-1}$ accounting for the observation $Y_n$ is computed, along with its covariance matrix $P_{n|n}$. The state is updated accordingly:
\begin{equation}
\label{eq::update_linear}
\hat X_{n|n} = \hat X_{n|n-1} +K_nz_n
\end{equation}
The detailed equations are recalled in Algorithm \ref{algo::EKF}. The assumption underlying the EKF is that through first-order approximations \emph{of the state error} evolution, the linear Kalman equations allow computing a Gaussian approximation of the error $e_{n|n}\sim\mathcal N(0,P_{n|n})$ after each measurement, yielding an approximation of the sought density $\mathbb P(X_{n}|Y_1,\cdots,Y_n)\approx \mathcal N( \hat X_{n|n},P_{n|n})$. However, the linearizations involved induce inevitable approximations that may lead the filter to inconsistencies and sometimes even divergence. 

\begin{algorithmic}
\begin{algorithm}
\caption{Extended Kalman Filter (EKF)}
\label{algo::EKF}
\STATE{Choose an initial uncertainty matrix $P_0$ and estimate $\hat X_0$}
\LOOP
\STATE Define $F_n, G_n$ and $H_n$ through                                                                                                                                                           \eqref{def:A_linear} and \eqref{def:H_linear}.
\STATE Define $Q_n$ as $\Cov(w_n)$ and $R_n$ as $\Cov(V_n)$.
\STATE \textbf{Propagation} 
\STATE $\hat X_{n|n-1} = f \left( \hat X_{n-1|n-1},u_n,0 \right)$
\STATE $P_{n|n-1} = F_n P_{n-1|n-1} F_n^T + G_n Q_n G_n^T$
\STATE \textbf{Update} 
\STATE $  z_n=Y_n- h \left( \hat X_{n|n-1} \right)$
\STATE $  S_n = H_n P_{n|n-1} H_n^T + R_n $,
 \STATE $
  K_n = P_{n|n-1} H_n^T S_n^{-1}$
\STATE $P_{n|n} = [I-K_n H_n]P_{n|n-1} $
\STATE{ $ \hat X_{n|n} = \hat X_{n|n-1} +K_n z_n$}
\ENDLOOP
\end{algorithm}
\end{algorithmic}

\subsection{The considered SLAM problem}
\label{sect::SLAM_problem}
For simplicity's sake let us focus on the standard ``steered" bicycle (or unicycle) model \cite{durrant1996autonomous}. The state is defined as:
\begin{equation}\label{state:def}
X_n = \left( \theta_n, x_n, p_n^1, \ldots, p_n^K \right),
\end{equation}
where $\theta_n \in \mathbb{R}$ denotes the heading, $x_n \in \RR^2$ the 2D position of the robot/vehicle, $p_n^j \in \RR^2$ the position of   unknown landmark $j$ (landmarks or synonymously features, constitute the map). The equations of the model are:
\begin{equation}
\label{eq::bicycle_dyn}
\begin{aligned}
 \theta_n & = \theta_{n-1} + \omega_n+w_n^{\omega}, \\
 x_n & = x_{n-1} + R(\theta_{n-1}) (v_n+ w_n^v), \\
 p_n^j & = p_{n-1}^j,\quad 1\leq j\leq K
\end{aligned}
\end{equation}
where $\omega_n \in \mathbb{R}$ denotes the odometry-based estimate of the heading variation of the vehicle, $v_n \in \mathbb{R}^2$ the odometry-based indication of relative shift, $w_n^{\omega}$ and $w_n^v$ their associated noises, and $R(\theta)$ is the matrix encoding a rotation of angle $\theta$:
$$
R(\theta) = \begin{pmatrix} \cos(\theta) & -\sin(\theta) \\ \sin(\theta) & \cos(\theta) \end{pmatrix}.
$$Note that a forward Euler discretization of the continuous time well-known unicycle equations   leads to $v_n \in \mathbb{R}^2$ having its second entry null. More sophisticated integration methods or models including side slip may yet lead to non-zero values of both entries of $v_n$ so we opt for a more general model with $v_n \in \mathbb{R}^2$. 
The covariance matrix of the noises will be denoted by
\begin{equation}
\label{eq::Qn}
Q_n = \Cov \begin{pmatrix} w_n^{\omega} \\ w_n^v\\0_{2K\times 1} \end{pmatrix} = \mathbb{E} \left(\begin{pmatrix} w_n^{\omega} \\ w_n^v \\0_{2K\times 1}\end{pmatrix} \begin{pmatrix} w_n^{\omega} \\ w_n^v \\0_{2K\times 1}\end{pmatrix}^T \right)\in\RR^{l\times l}
\end{equation}
with $l=3+2K$. A general landmark observation in the robot's frame reads:
\begin{equation}
\label{eq::bicycle_obs}
Y_n = \begin{pmatrix} 
\tilde h \left[ R(\theta_n)^T \left( p^1-x_n \right) \right]+V_n^1 \\
\vdots \\
\tilde h \left[ R(\theta_n)^T \left( p^K-x_n \right) \right]+V_n^K 
\end{pmatrix}
\end{equation}
where $Y_n \in \RR^{2 K}$ (or $ \RR^{ K}$ for monocular visual SLAM) is the observation of the features at time step $n$, and $V_n$ the observation noise, and $\tilde h$ is any function.  
\begin{rem}Only a subset of the features is actually observed at time $n$.  However, to simplify the exposure of the filters' equations, we systematically assume in the sequel that all features are observed.\end{rem} 
We let the output noise covariance matrix be
\begin{equation}
\label{eq::Rn}
R_n = \Cov \begin{pmatrix} V_n^1 \\ \vdots \\ V_n^K \end{pmatrix}.
\end{equation}
\begin{rem}
Note that, the observation model \eqref{eq::bicycle_obs} encompasses the usual range and bearing observations used in the SLAM problem 
by letting $\tilde h  \begin{pmatrix}
y_1 \\ y_2 \end{pmatrix} = \left(
\sqrt{y_1^2 + y_2^2} ,
\arctan 2 \left( y_2, y_1 \right)
\right)
$. If we choose instead the one dimensional observation $\tilde h \begin{pmatrix}
y_1 \\ y_2
\end{pmatrix} = 
\arctan 2 \left( y_2, y_1 \right)
$ we recover the 2D monocular SLAM measurement. Note also we do not provide any specific form for the noise in the output: this is because the properties we are about to prove are related to the observability and thus only depend on the deterministic part of the system, so they are in fact totally insensitive to the way the noise enters the system.
\end{rem}


\subsection{The EKF-SLAM algorithm}

We merely apply here the methodology of EKF to the SLAM problem described in Section \ref{sect::SLAM_problem}. The first-order expansions \eqref{def:A_linear}, \eqref{def:H_linear} applied to equations \eqref{eq::bicycle_dyn}, \eqref{eq::bicycle_obs} yield:
\begin{equation}
\begin{gathered}
\label{eq::F_H_linear_SLAM}
F_n = \begin{pmatrix} 1 & 0_{1,2} & 0_{1,2K} \\ 
                       R \left(\hat \theta_{n-1|n-1} \right) J v_n^T & I_2 & 0_{2,2K} \\
                      0_{2K,1} & 0_{2K,2} & I_{2K}
 \end{pmatrix},\\
G_n = \begin{pmatrix}
1 & 0_{1,2} & 0_{1,2K} \\
0_{2,1} & R \left(\hat \theta_{n-1|n-1} \right)  & 0_{2,2K} \\
 0_{2K,1} & 0_{2K,2} & 0_{2K,2K}
 \end{pmatrix}, ~
H_n =\begin{pmatrix} \nabla h^1 \cdot H_n^1 \\ \vdots \\ \nabla h^K \cdot H_n^K \end{pmatrix} \\H_n^k =\begin{pmatrix} - J R \left( \tilde\theta \right)^T  \left( \hat{p}_{n|n-1}^k-\hat{x}_{n|n-1} \right) & -R \left(\tilde\theta  \right)^T & R \left(\tilde\theta \right)^T \end{pmatrix},
\end{gathered}
\end{equation}
with $\tilde\theta=\hat{\theta}_{n|n-1}$, $J = \begin{pmatrix} 0 & -1 \\ 1 & 0 \end{pmatrix}$ and $\nabla h^k$ denotes the Jacobian of $\tilde h$ computed at $ R \left( \hat \theta_{n|n-1} \right)^T \left[ \hat p_{n|n-1}^k - \hat x_{n|n-1} \right] \in \RR^2$. The obtained EKF-SLAM algorithm is  recaped in  Algorithm \ref{algo::EKF_SLAM_linear}. 

\begin{algorithmic}
\begin{algorithm}
\caption{EKF SLAM}
\label{algo::EKF_SLAM_linear}
\STATE{Choose an initial uncertainty matrix $P_0$ and estimate $\hat X_0$}
\LOOP
\STATE Define $F_n, G_n$ and $H_n$ as in \eqref{eq::F_H_linear_SLAM}.
\STATE Define $Q_n$,  $R_n$ as in \eqref{eq::Qn} and \eqref{eq::Rn}.
\STATE \textbf{Propagation} 
\STATE{ $\hat \theta_{n|n-1}  = \hat \theta_{n-1|n-1} + \omega_n$}
\STATE{ $\hat x_{n|n-1}  = \hat x_{n-1|n-1} + R \left( \hat \theta_{n-1|n-1} \right) v_n$}
\STATE{ $\hat p_{n|n-1}^j  = \hat p_{n-1|n-1}^j$} for all $1\leq j\leq K$
\STATE $P_{n|n-1} = F_n P_{n-1|n-1} F_n^T + G_n Q_n G_n^T$
\STATE \textbf{Update} 
\STATE $  z_n=Y_n- \begin{pmatrix} 
\tilde h \left[ R(\hat \theta_{n|n-1})^T \left( \hat p^1_{n|n-1}- \hat x_{n|n-1} \right) \right] \\
\vdots \\
\tilde h \left[ R(\hat \theta_{n|n-1})^T \left( \hat p^K_{n|n-1}- \hat x_{n|n-1} \right) \right] 
\end{pmatrix}$
\STATE $  S_n = H_n P_{n|n-1} H_n^T + R_n $,
 \STATE $
  K_n = P_{n|n-1} H_n^T S_n^{-1}$
  \STATE $P_{n|n} = [I-K_n H_n]P_{n|n-1} $
\STATE{$  \hat X_{n|n} = \hat X_{n|n-1} + K_n z_n$ }
\ENDLOOP
\end{algorithm}
\end{algorithmic}

\section{Observability issues and consistency of the EKF}\label{Sec:2}
In this section we come back to the general framework \eqref{eq::general_dynamical_system}, \eqref{eq::general_dynamical_system_output}.
The standard issue of  observability \cite{gauthier1994observability} is fundamentally a deterministic notion so the noise is systematically  turned  off.

\begin{dfn}[Unobservable transformation]
\label{def::non_obs}
We say a transformation $\phi: \mathbb{R}^N \rightarrow \RR^N$ of the system \eqref{eq::general_dynamical_system}-\eqref{eq::general_dynamical_system_output} is unobservable if for any initial conditions $X_0^1\in\RR^N$ and $X_0^2 = \phi \left(X_0^1\right)$ the induced solutions of  the dynamics \eqref{eq::general_dynamical_system} with noise turned off, i.e., $ X_n=f(X_{n-1},u_n,0)$ yield the same output at each  time step $n \geqslant 0$, that is:
$$h(X_n^1)=h(X_n^2).$$
\end{dfn}
It concretely means that (with all noises turned off) if the transformation is applied to the initial state then none of the observations $Y_n$ are going to be affected. As a consequence, there is no way to know this transformation has been applied. In line with \cite{lee2006observability,huang2008analysis,
huang2010observability} we will focus here on the observability properties of the linearized system. To that end we define the notion of non-observable (or unobservable) shift which is an infinitesimal counterpart to Definition \ref{def::non_obs}, and is strongly related to the  infinitesimal observability \cite{gauthier1994observability}:

\begin{dfn}[Unobservable shift]
\label{def::non_obs_first_order}
Let $(X_n)_{n \geqslant 0}$ denote a solution of \eqref{eq::general_dynamical_system} with noise turned off. A vector $\delta X_0 \in \mathbb{R}^N$ is said to be an unobservable shift of \eqref{eq::general_dynamical_system}-\eqref{eq::general_dynamical_system_output} around $X_0$ if:
$$
\forall n \geqslant 0, \quad H_n \delta X_n = 0,
$$
where $H_n$ is the linearization of $h$ at $X_n$ and where $\delta X_n$ is the solution at $n$ of the linearized system  $\delta X_n = F_n \delta X_{n-1}$ initialized at $\delta X_0$,  with $F_n $ denoting the Jacobian matrix of $f(\cdot, u_n,0)$ computed at $ X_{n-1}$. 

In other words (see  e.g.  \cite{huang2010observability}), for all $n\geq 0$, $\delta X_n$  lies in the kernel of the  observability matrix   between steps $0$ and $n$ associated to the linearized error-state system model, i.e.,   $\delta X_0^T[H_0^T;(H_{1}F_1)^T;\cdots ;(H_{n}F_{n}\cdots F_1)^T]=0$.
\end{dfn}
The interpretation is as follows: consider another initial state   shifted from $X_0$ to $X_0 + \delta X_0$. Saying that $\delta X_0$ is unobservable means no  difference on the sequence of observations up to the first order could be detected between both trajectories. Formally, this condition reads: $h(X_n + \delta X_n)=h(X_n)+\circ \left( \delta X_n \right)$, i.e., $H_n \delta X_n=0$. An estimation method conveying its own estimation uncertainty as the EKF, albeit based on linearizations, should be able to detect such directions and to reflect that accurate estimates along such directions are beyond reach.

\subsection{Considered unobservable shifts}\label{direc:sec}

In the present paper we consider unobservability corresponding to the impossibility to observe the position and orientation of the global frame \cite{lee2006observability,huang2008analysis}. The corresponding shifts have already been derived in the literature. 

\begin{prop}
\label{prop::first_order_rotations_prelim}\cite{huang2010observability}
Let $\hat X = \begin{pmatrix}
\hat \theta , \hat x , \hat p
\end{pmatrix} $ be an estimate of the state. Only one feature is considered, the generalization of the proposition to several features is trivial. The first-order perturbation of the estimate corresponding to an infinitesimal rotation of angle $\delta \alpha$ of the global frame   consists of the shift $ \begin{pmatrix}
1 \\ J \hat x \\ J \hat p
\end{pmatrix} \delta \alpha,$ with $J=\begin{pmatrix} 0 & -1 \\ 1 & 0 \end{pmatrix}$. In the same way, the first-order perturbation of the estimate corresponding to an infinitesimal translation of the global frame of vector $\delta u\in\RR^2$ consists of the shift $\begin{pmatrix} 0 \\ \delta u \\ \delta u \end{pmatrix}$. 
\end{prop}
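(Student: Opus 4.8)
The plan is to exhibit, for each claimed shift, a genuine one-parameter family of unobservable transformations in the sense of Definition \ref{def::non_obs}, and then recover the shift by differentiating this family at the identity; the fact that the resulting vector is an unobservable shift in the sense of Definition \ref{def::non_obs_first_order} follows by differentiating the identity $h(X_n^1)=h(X_n^2)$ in the parameter.

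First, for the rotation I would introduce the action on the state of a rotation of the global frame by an angle $\alpha$, namely $\phi_\alpha(\theta,x,p)=(\theta+\alpha,\,R(\alpha)x,\,R(\alpha)p)$. To check $\phi_\alpha$ is unobservable I would take a noise-free trajectory $(\theta_n,x_n,p_n)$ generated by \eqref{eq::bicycle_dyn} and verify that $\phi_\alpha(\theta_n,x_n,p_n)$ is again a solution of \eqref{eq::bicycle_dyn}: the heading equation is invariant under adding a constant to $\theta$, the landmark equation is invariant under any fixed linear map, and the position equation is preserved thanks to the group property $R(\alpha)R(\theta_{n-1})=R(\alpha+\theta_{n-1})$, which gives $R(\alpha)x_n=R(\alpha)x_{n-1}+R(\theta_{n-1}+\alpha)v_n$. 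Then I would substitute into \eqref{eq::bicycle_obs}: since $R(\theta_n+\alpha)^T R(\alpha)=R(\theta_n)^T$, the argument $R(\theta_n)^T(p^j-x_n)$ of $\tilde h$ is left unchanged, hence so is $h$, which is exactly the requirement of Definition \ref{def::non_obs}.

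Second, differentiating $\alpha\mapsto\phi_\alpha(\hat\theta,\hat x,\hat p)$ at $\alpha=0$ and using $\frac{d}{d\alpha}R(\alpha)|_{\alpha=0}=J$ (equivalently $R(\alpha)=I+\alpha J+o(\alpha)$) produces the tangent vector $(1,\,J\hat x,\,J\hat p)^T$, so an infinitesimal rotation by $\delta\alpha$ perturbs the estimate by $(1,\,J\hat x,\,J\hat p)^T\delta\alpha$. Differentiating the identity $h(\phi_\alpha(X_n))=h(X_n)$ at $\alpha=0$ shows that $H_n$ annihilates the propagated shift $\delta X_n$ for every $n$, confirming it is an unobservable shift. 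For the translation I would run the same argument with $\psi_u(\theta,x,p)=(\theta,\,x+u,\,p+u)$, $u\in\RR^2$: the dynamics are preserved because $R(\theta_{n-1})$ acts only on $v_n$ while the position and landmark updates are affine with the same additive constant, and the observation is preserved because $(p^j+u)-(x_n+u)=p^j-x_n$. Since $\psi_u$ is already affine in $u$, its derivative in the direction $\delta u$ is the constant shift $(0,\,\delta u,\,\delta u)^T$, and the generalization to $K$ features is obtained by repeating the landmark block.

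There is no genuine obstacle here beyond bookkeeping; the only points requiring a little care are the systematic use of the rotation group law $R(\alpha)R(\theta)=R(\alpha+\theta)$ together with $R'(0)=J$, and making explicit the (elementary but worth stating) fact that differentiating a family of exact unobservable transformations yields an unobservable shift.
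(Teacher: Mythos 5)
Your proposal is correct, and for the statement actually being proved it takes essentially the same route as the paper: write the action of a global-frame rotation, $(\hat\theta,\hat x,\hat p)\mapsto(\hat\theta+\alpha,R(\alpha)\hat x,R(\alpha)\hat p)$, and differentiate at $\alpha=0$ using $R(\alpha)=I+\alpha J+o(\alpha)$ to get $(1,J\hat x,J\hat p)^T\delta\alpha$, with the translation case being immediate. The only difference is one of scope: Proposition \ref{prop::first_order_rotations_prelim} claims nothing about unobservability, so your verification that $\phi_\alpha$ preserves the noise-free dynamics and the output \eqref{eq::bicycle_obs} (via $R(\theta_n+\alpha)^TR(\alpha)=R(\theta_n)^T$), and the subsequent differentiation of $h(\phi_\alpha(X_n))=h(X_n)$ to get $H_n\delta X_n=0$, is extra work that in the paper constitutes Proposition \ref{prop::first_order_rotations}, which is cited from the literature without proof; your argument is a clean and correct proof of that second statement as well.
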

\begin{proof}
When rotating the global frame the heading becomes:
$$
\hat \theta \rightarrow \hat \theta + \delta \alpha.
$$
The position of the robot becomes:
$$
\hat x \rightarrow
\begin{pmatrix}
\cos(\delta \alpha) & -\sin(\delta \alpha) \\
\sin(\delta \alpha) & \cos(\delta \alpha)
\end{pmatrix} \hat x \approx \hat x+\delta \alpha \begin{pmatrix} 0 & -1 \\ 1 & 0 \end{pmatrix} \hat x.
$$
The position of the feature becomes:
$$
\hat p \rightarrow
\begin{pmatrix}
\cos(\delta \alpha) & -\sin(\delta \alpha) \\
\sin(\delta \alpha) & \cos(\delta \alpha)
\end{pmatrix} \hat p \approx \hat p +\delta \alpha \begin{pmatrix} 0 & -1 \\ 1 & 0 \end{pmatrix} \hat p.
$$
Stacking these results we obtain the first-order variation of the full state vector (regarding the rotation only, the effect of infinitesimal translation being trivial to derive):
\begin{align}\label{stacking}
\begin{pmatrix} \hat \theta \\ \hat x \\ \hat p \end{pmatrix}
\rightarrow \begin{pmatrix} \hat \theta \\ \hat x \\ \hat p \end{pmatrix} + 
\begin{pmatrix} 1 \\ J \hat x \\ J \hat p \end{pmatrix} \delta \alpha.
\end{align}
\end{proof}
\begin{prop}
\label{prop::first_order_rotations}\cite{huang2010observability}
The   shifts  of Proposition \ref{prop::first_order_rotations}  that correspond to infinitesimal rotations, are unobservable shifts of \eqref{eq::bicycle_dyn}-\eqref{eq::bicycle_obs} in the sense of Definition \ref{def::non_obs_first_order}.
\end{prop}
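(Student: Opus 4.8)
The plan is to verify Definition~\ref{def::non_obs_first_order} by a direct computation. Fix a noise-free trajectory $(\theta_n,x_n,p_n)_{n\ge 0}$ of \eqref{eq::bicycle_dyn} issued from $X_0=(\theta_0,x_0,p_0)$ (I treat one feature; the several-feature case is obtained block by block), and take the candidate shift $\delta X_0=(1,\;Jx_0,\;Jp_0)^T\,\delta\alpha$ of Proposition~\ref{prop::first_order_rotations_prelim}. The whole argument rests on one elementary fact: planar rotations commute with $J$, i.e. $R(\theta)J=JR(\theta)$ for every $\theta$ (indeed $J=R(\pi/2)$), together with $\tfrac{d}{d\theta}R(\theta)=R(\theta)J$.

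First I would propagate $\delta X_0$ through the linearized dynamics and show by induction on $n$ that
$$\delta X_n \;=\; \begin{pmatrix} 1\\ Jx_n\\ Jp_n\end{pmatrix}\delta\alpha .$$
Reading off the three block-rows of $F_n$ in \eqref{eq::F_H_linear_SLAM} evaluated along the true trajectory: the heading row gives $\delta\theta_n=\delta\theta_{n-1}=\delta\alpha$; the landmark rows give $\delta p_n=\delta p_{n-1}=Jp_{n-1}\delta\alpha=Jp_n\delta\alpha$ since $p_n=p_{n-1}$; and the robot-position rows give
$$\delta x_n \;=\; R(\theta_{n-1})Jv_n\,\delta\alpha + Jx_{n-1}\,\delta\alpha \;=\; J\big(x_{n-1}+R(\theta_{n-1})v_n\big)\delta\alpha \;=\; Jx_n\,\delta\alpha,$$
where I use that the $(2,1)$ block of $F_n$ is $\partial_\theta\big[R(\theta)v_n\big]\big|_{\theta_{n-1}}=R(\theta_{n-1})Jv_n$, then the commutation $R(\theta_{n-1})Jv_n=JR(\theta_{n-1})v_n$, then \eqref{eq::bicycle_dyn}. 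This closes the induction.

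Next I would feed this $\delta X_n$ into the observation Jacobian. Because each block-row of $H_n$ in \eqref{eq::F_H_linear_SLAM} is $\nabla h^k$ times $H_n^k$, it is enough to show $H_n^k\delta X_n=0$ for each $k$ — which also makes transparent why the statement holds for \emph{any} $\tilde h$. For the single feature,
$$H_n^1\delta X_n \;=\; \Big(-JR(\theta_n)^T(p_n-x_n)\;-\;R(\theta_n)^T(Jx_n)\;+\;R(\theta_n)^T(Jp_n)\Big)\delta\alpha ,$$
and the last two terms combine into $R(\theta_n)^TJ(p_n-x_n)\delta\alpha=JR(\theta_n)^T(p_n-x_n)\delta\alpha$, which cancels the first term. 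Hence $H_n\delta X_n=0$ for all $n\ge 0$, i.e. $\delta X_0$ is an unobservable shift.

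There is no genuine obstacle here; the computation is short. The only points demanding care are that, as required by Definition~\ref{def::non_obs_first_order}, all Jacobians $F_n,H_n$ must be evaluated along the noise-free trajectory $X_n$ itself (so that $\hat\theta_{n|n-1},\hat x_{n|n-1},\hat p^k_{n|n-1}$ in \eqref{eq::F_H_linear_SLAM} are read as $\theta_n,x_n,p^k_n$), and that the commutation $R(\theta)J=JR(\theta)$ — the structural reason why a rotation of the global frame is invisible to relative, robot-frame measurements — is precisely what makes both the propagation step and the output step go through.
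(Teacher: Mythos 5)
Your proof is correct. Note that the paper itself does not prove this proposition: it cites Huang et al.\ and offers only the intuitive remark that equal shifts of robot and landmarks are invisible to relative measurements, so your direct verification fills in an argument the paper delegates to the reference. Your computation is sound on the two points that matter: (i) the Jacobians $F_n,H_n$ of Definition~\ref{def::non_obs_first_order} are evaluated along the noise-free \emph{true} trajectory, which is exactly why the induction $\delta X_n=(1,\;Jx_n,\;Jp_n)^T\delta\alpha$ closes (along the true trajectory there are no update jumps, so the obstruction \eqref{never_verified} identified in Section~\ref{sect::EKS_SLAM8inconsistency} vanishes identically); and (ii) the commutation $R(\theta)J=JR(\theta)$, which makes both the propagation step and the cancellation $-JR(\theta_n)^T(p_n-x_n)+R(\theta_n)^TJ(p_n-x_n)=0$ in the output step go through for an arbitrary $\tilde h$, since $H_n^k\delta X_n=0$ before $\nabla h^k$ is even applied. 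This is consistent with the paper's later observation that the property depends only on the deterministic part of the system and not on the form of the output noise or of $\tilde h$.
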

The intuitive explanation is clear \cite{huang2010observability}:  ``if the robot and landmark positions are shifted equally along those vectors, it will not be possible to distinguish the shifted position from the original one through the measurements.'' 

\subsection{Inconsistency of the EKF}
\label{sect::EKS_SLAM8inconsistency}

This  section recalls using the notations of the present paper, a result of \cite{huang2008analysis}. It shows the infinitesimal rotations defined in Proposition \ref{prop::first_order_rotations} are not, in general, unobservable shifts of the system linearized about the trajectory estimated by the EKF. Indeed, applying Definition \ref{def::non_obs_first_order} to \eqref{eq::F_H_linear_SLAM} in the case of a single feature (the generalization being straightforward) with $ \delta X_0 = ( 1 , J \hat x_{0|0},J \hat p_{0|0} )^T$ and  $\delta X_n =F_n\cdots F_1\delta X_0 $ yields  the condition for an infinitesimal rotation of the initial state to be unobservable for the linearized system. This condition writes $H_n\delta X_n\equiv0$ and boils down to have for any $n >0$ (see \cite{huang2008analysis}):
$$
\nabla h_n \cdot J \cdot R(\theta_{n|n-1})^T \left[ -  \left( \hat p_{n|n-1} - \hat p_{0|0} \right) + \sum_{i=1}^{n-1} \left( \hat x_{i|i} - \hat x_{i|i-1} \right) \right]=0
$$
where $ \nabla h_n$ is the Jacobian of $\tilde h$ computed at $R \left( \hat \theta_{n|n-1} \right)^T \left( \hat p_{n|n-1} - \hat x_{n|n-1} \right)$. For example, if $\tilde h$ is invertible the condition boils down to 
 \begin{align}
\forall n>0,~~ \left[ -  \left( \hat p_{n|n-1} - \hat p_{0|0} \right) + \sum_{i=1}^{n-1} \left( \hat x_{i|i} - \hat x_{i|i-1} \right) \right]=0.\label{never_verified}
\end{align}
We see the quantities involved are the updates of the state. As they depend on the noise, there is a null probability  for the condition to be respected, and it is always violated in practice.

But the point of the present paper is to show that the problem is related to the (arbitrary in a non-linear context) choice to represent the estimation error  as the linear difference $X - \hat X$, not to an inconsistency issue inherent to EKF-like methods applied to SLAM. By devising an EKF-SLAM based on another estimation error variable, which in some sense amounts to change coordinates, the false observability problem can be corrected.  
The qualitative reason why this is sufficient is related to the basic cause of false observability: a given fixed shift may or may not be observable depending on the linearization point $\hat X$, as proved by Proposition \ref{prop::first_order_rotations}.  It turns out that the latter property is not inherently related to the SLAM problem: it is in fact a mere consequence of the errors' definition \eqref{err:elf}. Defining those errors otherwise can dramatically modify the condition \eqref{never_verified}. This is the object of the remainder of this article.

\section{A novel EKF-SLAM algorithm}\label{Sec:22}

Building upon the theory of the Invariant (I)EKF on matrix Lie groups, as described and studied in \cite{barrau2014invariant}, we introduce in this section a novel IEKF for SLAM.  In Appendix \ref{primer}-\ref{gen:IEKF} the general theory of the IEKF is recalled and slightly extended to account for the very general form of output \eqref{eq::bicycle_obs}, and the algorithm derived herein is shown to be a direct application of the theory. To spare the reader a  study of the Lie group based theory, we attempt  to explain in simple terms  the IEKF methodology on the particular SLAM example throughout the present section.

Consider the model equations \eqref{eq::bicycle_dyn} with state $X_n$ given by \eqref{state:def}. Exactly as the EKF, the IEKF propagates the estimated state obtained after the observation $Y_{n-1}$ of \eqref{eq::bicycle_obs} through the deterministic part of  \eqref{eq::bicycle_dyn} i.e., $\hat \theta_{n|n-1}  = \hat \theta_{n-1|n-1} + \omega_n$, $\hat x_{n|n-1}  = \hat x_{n-1|n-1} + R \left( \hat \theta_{n-1|n-1} \right) v_n$, $\hat p_{n|n-1}^j  = \hat p_{n-1|n-1}^j$ for all $1\leq j\leq K$. 
To update the predicted state $\hat X_{n|n-1}$ using the observation $Y_n$ we use a first order Taylor expansion of the error system. But, \emph{instead of considering the usual state error  $X-\hat X$}, we rather use the (linearized) estimation  error  defined as follows 
\begin{equation}
\label{xi::non_linear_error}
\xi_{n|n-1} = \begin{pmatrix}
\theta_n - \hat \theta_{n|n-1} \\
  x_n - \hat x_{n|n-1} -\left( \theta_n -{\hat \theta}_{n|n-1} \right)J \hat x_n\\
p_n^1- \hat p^1_{n|n-1}-\left( \theta_n -{\hat \theta}_{n|n-1} \right)J \hat p^1_{n|n-1}
  \\
 \vdots \\
 p^K_n - \hat p^K_{n|n-1}-\left( \theta_n -{\hat \theta}_{n|n-1} \right)J \hat p^K_{n|n-1}
\end{pmatrix} 
\end{equation}and  $\xi_{n|n}$ is analogously defined. For close-by $X,\hat X$,   this represents an  error variable in the usual sense indeed, as $\xi=0$ if and only if $\hat X=X$. As in the standard EKF methodology, let us see how this \emph{alternative} estimation error is propagated through a first-order approximation of the error system. Using  the propagation equations of the filter, and \eqref{eq::bicycle_dyn}, we find 
\begin{equation}\begin{aligned}\label{first:linearized}\xi_{n|n-1}=\xi_{n-1|n-1}+\begin{pmatrix}w_n^\omega \\
    - w_n^\omega J \hat x_{n-1|n-1}+R({\hat \theta}_{n-1|n-1}) w_n^v \\
  - w_n^\omega J  \hat p^1_{n-1|n-1}
\\
 \vdots\\
   - w_n^\omega J  \hat p^K_{n-1|n-1} \end{pmatrix}
\end{aligned}\end{equation}
where terms of order $O(\norm{\xi_{n-1|n-1}}^2)$, $O( \norm{w_n^\omega}\norm{\xi_{n-1|n-1}})$, and $O( \norm{w_n^\omega}^2)$ have been neglected as in the standard   the EKF handles non-additive noises  \cite{Stengel}. To derive \eqref{first:linearized}  we have used the equalities $\forall \theta,\hat\theta\in\RR$, $w\in\RR^2$:
\begin{align}
& R(\theta)w=R(\hat\theta)w+O(|\hat\theta-\theta|~\norm{w})\label{K1}
\\&R(\theta)-R(\hat\theta)-(\theta-\hat\theta)JR(\hat\theta)=O(|\theta-\hat\theta|^2)\label{K2}
\end{align}
Note that, the odometer outputs $\omega_n,v_n$ have miraculously vanished. This is in fact a characteristics - and a key feature - of the IEKF approach. 

Let us now compute the first-order approximation of the observation error, using the alternative state error \eqref{xi::non_linear_error}. Define $H_n$ as the matrix, depending on $\hat X_{n|n-1}$ only, such that for all $\xi_{n|n-1} \in\RR^{2K+3}$   defined by  \eqref{xi::non_linear_error},  the innovation term
\begin{equation*}
\begin{pmatrix} 
\tilde h \left[ R(\theta_n)^T \left( p^1-x_n \right) \right] \\
\vdots \\
\tilde h \left[ R(\theta_n)^T \left( p^K-x_n \right) \right]
\end{pmatrix}-\begin{pmatrix} 
\tilde h \left[ R(\hat \theta_{n|n-1} )^T \left(\hat p^1_{n|n-1}-\hat x_{n|n-1} \right) \right] \\
\vdots \\
\tilde h \left[ R(\hat \theta_{n|n-1} )^T \left( \hat p^K_{n|n-1}-\hat x_{n|n-1} \right) \right]
\end{pmatrix}
\end{equation*}is equal to $H_n\xi_{n|n-1} +O(\norm{\xi_{n|n-1} }^2)$. Using that $R(\theta)^T(p-x)-R(\hat \theta)^T(\hat p-\hat x)=R(\theta)^T[(p-x)-R(\theta-\hat \theta)(\hat p-\hat x)]$, and $R(\theta)^T {\xi}=R(\hat \theta)^T {\xi}+O(\norm{\xi}^2)$  from \eqref{K1}, we see that  $H_n$ is defined as in \eqref{eq::A_H_G_IEKF} below. Thus the linearized (first-order) system  model with respect to alternative error  \eqref{xi::non_linear_error} writes
\begin{equation}
\begin{aligned}\label{def:A_non_linear_Lie_2}
\xi_{n|n-1}  &=F_n\xi_{n-1|n-1} +G_n  w_n,\\
Y_n-h(\hat X_{n|n-1})&=H_n\xi_{n|n-1} +V_n
\end{aligned}
\end{equation}
with $w_n^T=(w_n^\omega,(w_n^v)^T,0_{1\times 2K})^T$, and 
\begin{equation}
\label{eq::A_H_G_IEKF}
\begin{gathered}
F_n  = I_{2K+3}, ~
G_n  = \begin{pmatrix} 1 & 0_{1,2} & 0_{1,2K} \\
- J \hat x_{\tiny{n-1|n-1}} & R \left(\hat \theta_{n-1|n-1}\right) & 0_{2,2K} \\
 -J \hat p^1_{n-1|n-1} & 0_2 & 0_{2,2K} \\
\vdots & \vdots & \vdots \\
-J \hat p^K_{n-1|n-1} & 0_2 & 0_{2,2K}
 \end{pmatrix},\\
H_n  = \begin{pmatrix}
\nabla \tilde h^1 \cdot R \left( \hat \theta_{n|n-1} \right)^T \begin{pmatrix} 0_{2,1} & -I_2 & I_2 & 0_{2,2(K-1)} \end{pmatrix} \\
\nabla \tilde h^2 \cdot R \left( \hat \theta_{n|n-1} \right)^T \begin{pmatrix} 0_{2,1} & -I_2 & 0_{2,2} & I_2 & 0_{2,2(K-2)} \end{pmatrix} \\
\vdots \\
\nabla \tilde h^K \cdot R \left( \hat \theta_{n|n-1} \right)^T \begin{pmatrix} 0_{2,1} & -I_2 & 0_{2,2(K-1)} & I_2 \end{pmatrix}
\end{pmatrix},
\end{gathered}
\end{equation}
where $\nabla \tilde h^k$ is the Jacobian of $\tilde h$ computed at $R \left( \hat \theta_{n|n-1} \right)^T \left( \hat p_{n|n-1}^k - \hat x_{n|n-1} \right)$. 
As in the  standard EKF methodology, the matrices $F_n,G_n,H_n$ allow to compute the  Kalman gain $K_n$ and covariance $P_{n}$. Letting $z_n$ be the standardly defined innovation (see Algorithm \ref{algo::IEKF_SLAM} just after ``Update"),   $\xi_{n|n}=K_nz_n$ is an estimate of the linearized error $\xi_{n|n-1}$ accounting for the observation $Y_n$, and $P_{n|n}$ is supposed to encode the dispersion  $\mathbb E(\xi_{n|n}\xi_{n|n}^T)$. 

The final step of the standard EKF methodology is to update the estimated state $\hat X_{n|n-1}$ thanks to the estimated linearized error $\xi_{n|n}=K_nz_n$. There is a small catch, though: $\xi$ being not anymore defined as a mere difference $X-\hat X$, simply adding  $\xi_{n|n}$ to $\hat X_{n|n}$   would not be appropriate.  The most natural counterpart to \eqref{eq::update_linear} in our setting, would be to choose for $\hat X_{n|n}$ the values of $(\theta_n,x_n,\cdots, p_n^1,p_n^K)$ making the right member of \eqref{xi::non_linear_error}   equal to the just computed $\xi_{n|n}$. However, the  IEKF theory recalled in  Appendix \ref{gen:IEKF}, suggests an update that amounts to the latter to the first order, but whose non-linear structure ensures better properties \cite{barrau2014invariant}. Thus, the state is updated as follows $\hat X_{n|n}=\varphi(\xi_{n|n},\hat X_{n|n-1})=\varphi(K_nz_n,\hat X_{n|n-1})$, with $\varphi$   defined by
\begin{equation}
\begin{aligned}
\varphi\bigl(\begin{pmatrix}\delta \theta  \\  \delta x \\  \delta p^1 \\ \vdots \\  \delta p^K\end{pmatrix},\begin{pmatrix}\hat \theta  \\ \hat x \\  \hat p^1 \\ \cdots \\  \hat p^K\end{pmatrix}\bigr)= \begin{pmatrix}  \hat \theta +\delta {\theta} \\ R(\delta \theta) \hat x+ B(\delta \theta) \delta x \\ R(\delta \theta) \hat p^1 + B(\delta \theta) \delta p^1 \\ \vdots \\ R(\delta \theta) \hat p^K + B(\delta \theta) \delta p^K \end{pmatrix}\label{eq::exp_SE2}\end{aligned}
\end{equation}where $B(\alpha) = \begin{pmatrix} \frac{ \sin \left( \alpha \right) }{ \alpha } & - \frac{ 1 - \cos \left( \alpha \right) }{ \alpha } \\ \frac{1-\cos \left( \alpha \right) }{ \alpha } &  \frac{ \sin \left( \alpha \right) }{ \alpha } \end{pmatrix}$. Algorithm \ref{algo::IEKF_SLAM} recaps the various steps of the IEKF SLAM.

\begin{algorithmic}
\begin{algorithm}
\caption{IEKF SLAM}
\label{algo::IEKF_SLAM}
\STATE{The state is  defined by $X=(\theta,x^T,(p^1)^T,\cdots,(p^K)^T))\in\RR^{3+2K}$. Pick an initial uncertainty matrix $P_0$ and estimate $\hat X_0$.}
\LOOP
\STATE Define $F_n, G_n$ and $H_n$ as in \eqref{eq::A_H_G_IEKF}.
\STATE Define $Q_n$,  $R_n$ as in \eqref{eq::Qn} and \eqref{eq::Rn}.
\STATE \textbf{Propagation} 
\STATE{ $\hat \theta_{n|n-1}  = \hat \theta_{n-1|n-1} + \omega_n$}
\STATE{ $\hat x_{n|n-1}  = \hat x_{n-1|n-1} + R \left( \hat \theta_{n-1|n-1} \right) v_n$}
\STATE{ $\hat p_{n|n-1}^j  = \hat p_{n-1|n-1}^j$} for all $1\leq j\leq K$
\STATE $P_{n|n-1} = F_n P_{n-1|n-1} F_n^T + G_n Q_n G_n^T$
\STATE \textbf{Update} 
\STATE $  z_n=Y_n- \begin{pmatrix} 
\tilde h \left[ R(\hat \theta_{n|n-1})^T \left( \hat p^1_{n|n-1}- \hat x_{n|n-1} \right) \right] \\
\vdots \\
\tilde h \left[ R(\hat \theta_{n|n-1})^T \left( \hat p^K_{n|n-1}- \hat x_{n|n-1} \right) \right] 
\end{pmatrix}$
\STATE $  S_n = H_n P_{n|n-1} H_n^T + R_n $,
 \STATE $
  K_n = P_{n|n-1} H_n^T S_n^{-1}$
  \STATE $P_{n|n} = [I-K_n H_n]P_{n|n-1} $
\STATE{Use \eqref{eq::exp_SE2} to compute $ \hat X_{n|n}=\varphi(K_nz_n,\hat X_{n|n-1})$. }
\ENDLOOP
\end{algorithm}
\end{algorithmic}

\section{Remedying EKF SLAM  consistency}\label{Sec:3}

In this section we show  the infinitesimal rotations and translations of the global frame are unobservable shifts in the sense of  Definition \ref{def::non_obs_first_order} regardless of the linearization points  used to compute the matrices $F_n$ and $H_n$ of eq. \eqref{eq::A_H_G_IEKF}, a feature in sharp contrast with the usual restricting condition \eqref{never_verified} on the linearization points. In other words we show that infinitesimal rotations and translations of the global frame are always  unobservable shifts of the system model  \emph{linearized} with respect to error \eqref{xi::non_linear_error} regardless of the linearization point, a feature in sharp contrast with previous results (see Section \ref{sect::EKS_SLAM8inconsistency} and references therein). 

\subsection{Main result}

We can consider only one feature ($K=1$) without loss of generality. The expression of the linearized system model has become much simpler, as the linearized error  has the remarkable property to remain constant during the propagation step in the absence of noise, since $F_n = I_{3+2K}$ in \eqref{def:A_non_linear_Lie_2}-\eqref{eq::A_H_G_IEKF}. 
First, let us derive the impact of first-order variations  stemming from rotations and translations of the global frame on the error as  defined by   \eqref{xi::non_linear_error}, that is, an error of the following form
\begin{equation}
\label{eq::non_linear_error_f}
 \xi=\begin{pmatrix}(\theta -\hat\theta)\\x-\hat x- (\theta-\hat \theta) J\hat x\\p-\hat p- (\theta-\hat \theta) J\hat p\end{pmatrix}, 
\end{equation}
\begin{prop}
\label{prop::first_order_rotations_non_linear}
Let $\hat X = (
\hat \theta ,\hat x , \hat p )^T$ be an estimate of the state. The first-order perturbation of the \emph{linearized} estimation error  defined by \eqref{eq::non_linear_error_f} around 0, corresponding to an \emph{infinitesimal} rotation  of  angle $\delta \alpha$ of the global frame,  reads $ \begin{pmatrix}
1 \\ 0_{2,1} \\ 0_{2,1}
\end{pmatrix} \delta \alpha.$ In the same way, an  \emph{infinitesimal} translation of the global frame with vector $\delta u\in\RR^2$ implies a first-order perturbation of the error system \eqref{eq::non_linear_error_f}  of the form $(0,\delta u,\delta u)^T.$
\end{prop}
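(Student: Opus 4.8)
The plan is to read \eqref{eq::non_linear_error_f} as a smooth map $\xi=\xi(X,\hat X)$ of the pair (true state, estimate) that vanishes exactly on the diagonal $X=\hat X$. The phrase ``first-order perturbation of the \emph{linearized} estimation error around $0$'' then just means: take the differential of $\xi$ with respect to $X$, evaluated at $X=\hat X$ (so that $\xi=0$), and apply it to the perturbation of the \emph{true} state produced by the global-frame transformation, the estimate being held fixed exactly as in Definitions \ref{def::non_obs}--\ref{def::non_obs_first_order}. Since those state perturbations have already been computed in Proposition \ref{prop::first_order_rotations_prelim} — namely $(1,\,J\hat x,\,J\hat p)^T\delta\alpha$ for the rotation (read at the linearization point $\hat X$) and $(0,\,\delta u,\,\delta u)^T$ for the translation — the entire proof reduces to pushing these two vectors through $D_X\xi|_{X=\hat X}$ and recording the cancellations.

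Concretely, writing $\xi=(\xi_\theta,\xi_x,\xi_p)^T$ and differentiating \eqref{eq::non_linear_error_f} with respect to $X=(\theta,x,p)$ at $X=\hat X$, one finds for a generic state shift $v=(v_\theta,v_x,v_p)^T$ that $D_X\xi|_{X=\hat X}[v]=(v_\theta,\ v_x-v_\theta J\hat x,\ v_p-v_\theta J\hat p)^T$; the terms $(\theta-\hat\theta)J(\cdot)$ present in \eqref{eq::non_linear_error_f} contribute nothing, since they are evaluated at $\theta=\hat\theta$. For the rotational shift $v=(1,J\hat x,J\hat p)^T\delta\alpha$ the second and third blocks become $\delta\alpha J\hat x-\delta\alpha J\hat x=0$ and $\delta\alpha J\hat p-\delta\alpha J\hat p=0$, so only $(1,0_{2,1},0_{2,1})^T\delta\alpha$ survives: this is precisely the announced cancellation, and it is the conceptual heart of the statement — the $\hat X$-dependent part of the unobservable direction is exactly what the error variable \eqref{eq::non_linear_error_f} is designed to absorb. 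For the translational shift $v=(0,\delta u,\delta u)^T$ we have $v_\theta=0$, so no cancellation is even needed and $D_X\xi|_{X=\hat X}[v]=(0,\delta u,\delta u)^T$ at once.

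There is no genuinely hard step: the argument is a one-line Taylor expansion of \eqref{eq::non_linear_error_f} followed by substitution. The points that deserve care are purely bookkeeping: (i) that the global-frame transformation perturbs the true state while the estimate is frozen, so that only $D_X\xi$ — not the full differential $D_X\xi+D_{\hat X}\xi$ — enters (one can note that under the \emph{simultaneous} action on $X$ and on $\hat X$ the error \eqref{eq::non_linear_error_f} is invariant, so its perturbation would be zero, which is why the non-trivial answer $(1,0,0)^T\delta\alpha$ shows up only under the one-sided perturbation); and (ii) that the generator from Proposition \ref{prop::first_order_rotations_prelim} is read at the base point $X=\hat X$, so that its position and landmark entries are $J\hat x$ and $J\hat p$, which is exactly what makes the cancellation in the second and third blocks \emph{exact} rather than merely first-order. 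Stacking the $K$ landmark blocks, each identical to the single-landmark computation, then yields the general case and completes the proof.
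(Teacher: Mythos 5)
Your proof is correct and follows essentially the same route as the paper's: perturb the true state by the generator of Proposition \ref{prop::first_order_rotations_prelim} while freezing the estimate, linearize \eqref{eq::non_linear_error_f} at $\xi=0$, and record the cancellation $\delta\alpha J\hat x-\delta\alpha J\hat x=0$ (the paper keeps $\xi$ generic and simply absorbs the difference into $O(\delta\alpha\norm{\xi})$ terms). One phrasing quibble: the terms $(\theta-\hat\theta)J\hat x$ in \eqref{eq::non_linear_error_f} do contribute to $D_X\xi$ — namely the $-v_\theta J\hat x$ appearing in your own displayed formula, which is precisely what produces the cancellation — so the clause ``contribute nothing'' should be deleted or restricted to the variation of the (fixed) factor $J\hat x$.
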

\begin{proof}
According to Proposition \ref{prop::first_order_rotations_prelim}, an infinitesimal rotation by an angle $\delta\alpha\ll 1$ of the true state corresponds to the transformation $\theta\to\theta+\delta\alpha$. $x\to x+\delta\alpha Jx$ and $p\to p+\delta\alpha Jp$. Regarding $\xi$ of eq \eqref{eq::non_linear_error_f} it corresponds to the variation
$$
\xi\to\xi+
\begin{pmatrix}
\delta \alpha \\
0_{2,1} \\
0_{2,1}
\end{pmatrix}+O(\delta\alpha\norm{\xi})+O(\delta\alpha^2).
$$This direction of the state space is ``seen" by the \emph{linearized} error system  as the vector $(\delta\alpha,0,0,0,0)^T$. Similarly,  a translation of vector $\delta u$ of the global frame yields the transformation $ \theta\to \theta,~  x\to\  x+\delta u,~ p\to p+\delta u$. The effect on the linearized error $\xi$ of \eqref{eq::non_linear_error_f}  is obviously the perturbation $(0,\delta u,\delta u)^T$ neglecting terms of order $\delta u\norm{\xi}$.
\end{proof}

We can now prove the first major result of the present article:  the infinitesimal transformations stemming from rotations and translations of the gobal frame  are unobservable shifts for the IEKF linearized model.

\begin{thm}
\label{SLAM:thm:obs}
Consider the SLAM problem defined by equations \eqref{eq::bicycle_dyn} and \eqref{eq::bicycle_obs}, and the IEKF-SLAM algorithm \ref{algo::IEKF_SLAM}. Let $\delta X_0$ denote a linear combination of  infinitesimal rotations and translations $\delta X_0^R, \delta X_0^1, \delta X_0^2$ of the whole system defined as follows
$$
\delta X_0^R = \begin{pmatrix}
1 \\
0_{2,1} \\
0_{2,1}
\end{pmatrix},
\qquad
\delta X_0^1 = \begin{pmatrix}
  0 \\
  1 \\
  0 \\
  1 \\
  0
\end{pmatrix},
\qquad
 \delta X_0^2 = \begin{pmatrix}
  0 \\
  0 \\
  1 \\
  0 \\
  1
\end{pmatrix}.
$$
Then $\delta X_0$ is an unobservable shift of the linearized system model \eqref{def:A_non_linear_Lie_2}-\eqref{eq::A_H_G_IEKF}  of the IEKF SLAM in the sense of Definition \ref{def::non_obs_first_order}, and this whatever the sequence of true states and estimates $(X_n,\hat X_{n|n},\hat X_{n|n-1})$: the very structure of the IEKF  is consistent with the considered unobservability.
\end{thm}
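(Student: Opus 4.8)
The plan is to exploit the two structural features of the IEKF linearized model \eqref{def:A_non_linear_Lie_2}--\eqref{eq::A_H_G_IEKF} established in Section \ref{Sec:22}. The first is that the transition matrix is the identity, $F_n = I_{2K+3}$, so any initial shift is propagated unchanged: $\delta X_n = F_n\cdots F_1\,\delta X_0 = \delta X_0$ for every $n$. The second is that each block row of $H_n$ has the fixed shape $\nabla\tilde h^k\cdot R(\hat\theta_{n|n-1})^T\begin{pmatrix} 0_{2,1} & -I_2 & \cdots & I_2 & \cdots \end{pmatrix}$, in which the column acting on $\delta\theta$ is zero and the block acting on $\delta x$ is the exact opposite of the block acting on $\delta p^k$, irrespective of the linearization point $\hat X_{n|n-1}$.

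I would first reduce to $K=1$ as the statement permits, and recall $\delta X_n = \delta X_0$. Then it suffices to evaluate $H_n$ on each of the three generators. For $\delta X_0^R = (1,0_{2,1},0_{2,1})^T$ the only nonzero entry multiplies the zero column of $\begin{pmatrix} 0_{2,1} & -I_2 & I_2 \end{pmatrix}$, so $H_n\,\delta X_0^R = 0$. For $\delta X_0^1 = (0,1,0,1,0)^T$ and $\delta X_0^2 = (0,0,1,0,1)^T$ the $\delta\theta$-entry vanishes while $\delta x = \delta p$, hence $\begin{pmatrix} 0_{2,1} & -I_2 & I_2 \end{pmatrix}\delta X_0^i = -\delta x + \delta p = 0$ and $H_n\,\delta X_0^i = 0$. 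By linearity $H_n\,\delta X_0 = 0$ for every linear combination of the three generators, and since $\delta X_n = \delta X_0$ this gives $H_n\,\delta X_n = 0$ for all $n$, which is exactly what Definition \ref{def::non_obs_first_order} requires.

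For general $K$ I would take $\delta X_0^R$, $\delta X_0^1$, $\delta X_0^2$ with the two translation components replicated across the robot-position block and all $K$ landmark blocks, and repeat the computation row block by row block: the $k$-th row of $H_n$ only involves the robot block and the $k$-th landmark block, so the same cancellation $-\delta x + \delta p^k = 0$ occurs. Throughout, the estimates $\hat X_{n|n}$, $\hat X_{n|n-1}$ and the true states $X_n$ enter only through the innocuous factors $\nabla\tilde h^k$ and $R(\hat\theta_{n|n-1})^T$, which cannot spoil a zero; this is precisely the contrast with condition \eqref{never_verified}, where the update-dependent terms cannot be made to vanish.

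There is no genuinely hard step: the whole force of the theorem is front-loaded into the choice of error variable \eqref{xi::non_linear_error} of Section \ref{Sec:22}, which is what forces $F_n = I$ and turns the unobservable directions into fixed coordinate vectors. The only thing requiring care is bookkeeping — writing the $K$-feature generators consistently with the block structure of $H_n$ in \eqref{eq::A_H_G_IEKF}, and recalling from Proposition \ref{prop::first_order_rotations_non_linear} that $\delta X_0^R,\delta X_0^1,\delta X_0^2$ are the images in the $\xi$-coordinates of the infinitesimal rigid motions of the global frame, so that the statement really certifies physical unobservability rather than a mere algebraic identity.
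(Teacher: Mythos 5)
Your proof is correct and follows essentially the same route as the paper's: use $F_n = I$ to get $\delta X_n = \delta X_0$, then verify $H_n\,\delta X_0 = 0$ directly on each generator using the block structure $\begin{pmatrix} 0_{2,1} & -I_2 & I_2 \end{pmatrix}$ of $H_n$. Your version merely spells out the cancellations and the $K$-feature bookkeeping more explicitly than the paper does.
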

\begin{proof}
Note that Definition \ref{def::non_obs_first_order} involves a propagated perturbation $\delta X_n$, but as here $F_n$ is $I_5$: we have $\forall n>0, \delta X_n = \delta X_0$. Thus, the only point to check is:
$
H_n \left( \delta X_0 \right) = 0,
$
i.e., $\nabla \tilde h \cdot R \left( \hat \theta_{n|n-1} \right)^T \begin{pmatrix} 0_{2,1} & -I_2 & I_2 \end{pmatrix}\delta X_0= 0$. This is straightforward replacing $\delta X_0$ with alternatively $\delta X_0^1, \delta X_0^2$ and $\delta X_0^R$.
\end{proof}
We obtained the consistency property we were pursuing: the linearized model correctly captures the unobservability of global rotations and translations. As a byproduct, the unobservable seen by the filter is automatically of appropriate dimension. 

\subsection{Interpretation and discussion}\label{basis:sec}

The standard EKF is tuned to reduce   the state estimation error $\hat X-X$ defined through the original state variables $X,\hat X$ of the problem. Albeit perfectly suited to the linear case, the latter state error has in fact  absolutely no fundamental reason to rule the linearization process in a non-linear setting. The basic difference when analyzing the EKF and the IEKF is that
\begin{itemize}
\item In the standard EKF, there is a trivial correspondence between a small variation of the true state and a small variation of the estimation error \eqref{err:elf}. But the global rotations of the frame make the error vary in a non-trivial way as recalled in Section \ref{Sec:2}. 
\item In the IEKF approach, the effect of a small rotation of the state on the variation of the estimation error \eqref{xi::non_linear_error} becomes trivial as ensured by Proposition \ref{prop::first_order_rotations_non_linear}. But the error is non-trivially related to the state, as its definition explicitly depends on the linearization point $\hat X$.  
\end{itemize}

Many consistency issues of the EKF  stem from the fact that the updated covariance matrix $P_{n|n}$ is computed before the update,  namely at the predicted state $\hat X_{n|n-1}$, and thus does not account for the  updated state's value $\hat X_{n|n}$, albeit supposed to reflect the covariance of the updated error. This is why the OC-EKF typically seeks to avoid linearizing at the latest, albeit best, state estimate, in order to find a close-by state  such that the covariance matrix resulting from linearization preserves the  observability subspace dimension. The IEKF approach is wholly different: the updated covariance $P_{n|n}$  is  computed at the latest estimate $\hat X_{n|n-1}$, which is akin to the standard EKF methodology. But it is then indirectly adapted to the updated state, since it is \emph{interpreted} as the covariance of the error $\xi_{n|n}$. And contrarily to the standard case, the definition of this error   depends on  $\hat X_{n|n}$.   More intuitively, we can say the confidence ellipsoids encoded in $P_{n|n}$ are attached to a basis that undergoes a transformation when moved over from $\hat X_{n|n-1}$ to $\hat X_{n|n}$, this transformation being tied to the unobservable directions. This prevents spurious reduction of the covariance over unobservable shifts, which are not identical at $\hat X_{n|n-1}$ and  $\hat X_{n|n}$.  

Finally, note the alternative error \eqref{eq::non_linear_error_f}  is all but artificial: it naturally stems from the Lie group structure of the problem. This is logical as the considered unobservability actually pertains to an \emph{invariance} of the model \eqref{eq::bicycle_dyn}-\eqref{eq::bicycle_obs}, that is  the SLAM problem, to global translations and rotations. Thus it comes as no surprise the \emph{Invariant} approach, that brings to bear  {invariant} state errors that encode the very symmetries of the problem,  prove fruitful  (see the appendix for more details). 

\section{IEKF consistency and information}
\label{sect::tools}

Our approach can be related to the previous work \cite{huang2010observability}. Indeed, according to the latter article, failing to capture the right dimension of the observability subspace in the linearized model leads to ``spurious
information gain along directions of the state space where no
information is actually available'' and results in ``unjustified reduction of the covariance estimates,  a primary cause of filter inconsistency''. Theorem \ref{SLAM:thm:obs} proves that infinitesimal rotations and translations of the global frame, which are unobservable in the SLAM problem, are always ``seen'' by the IEKF linearized model as unobservable directions indeed, so this filter does not suffer from ``false observability'' issues. This is our major theoretical result.

That said,  the results of the latter section concern the system with noise turned off, and pertain to an automatic control approach to the notion of observability as in \cite{huang2010observability}. The present section is rather concerned with the estimation theoretic  consequences of Theorem \ref{SLAM:thm:obs}. We prove indeed, that the IEKF's output covariance matrix correctly reflects an absence of ``information gain''  along the unobservable directions, as mentioned above, but where  the information is now to be understood in the sense of Fisher information.
 As a by-product, this allows relating our results to a slightly different approach to SLAM consistency,   that rather  focuses on the Fisher information matrix  than on the observability matrix, see in particular \cite{Wang,Cetto}.

\subsection{The general Bayesian Fisher Information Matrix}\label{BFIM:sec}
The exposure of the present section is based on the seminal article \cite{Tichavsky}. See also \cite{Wang,Cetto} for related ideas applied to SLAM.   
Consider the system \eqref{eq::general_dynamical_system} with output \eqref{eq::general_dynamical_system_output}. Define the collection of state vectors and observations up to time $n$:
$$
\tilde X_n=(X_0^T,\cdots,X_n^T)^T,\quad \tilde Y_n=(Y_0^T,\cdots,Y_n^T)^T,
$$The joint probability distribution of the $(n+1)N $ vector $\tilde X_n$ and of the $np$ vector $\tilde Y_n$ is 
$$
p(\tilde Y_n,\tilde X_n)=p(X_0)\Pi_{i=1}^np(Y_i\mid X_i) p(X_i\mid X_{i-1})
$$The Bayesian Fisher Information Matrix (BIFM) is defined as the following $Nk\times Nk$ matrix based upon  the dyad of the gradient of the log-likelihood:
$$
J(\tilde X_n)=\mathbb E([\nabla_{\tilde X_n} \log p(\tilde Y_n,\tilde X_n)][\nabla_{\tilde X_n} \log p(\tilde Y_n,\tilde X_n]^T)
$$and note that, for the SLAM problem it boils down to the matrix of \cite{huang2010observability}. This matrix is of interest to us as it yields a lower bound on the accuracy achievable by  any estimator  used to attack the filtering problem \eqref{eq::general_dynamical_system}-\eqref{eq::general_dynamical_system_output}. Indeed let $J_n$ be defined as the \emph{inverse} of the $N\times N$ right-lower block of $[J(\tilde X_n)]^{-1}$. This matrix provides a lower bound on the mean square error of estimating $X_n$ from past and present measurements $\tilde Y_n$ and prior $p(X_0)$. Indeed, for any unbiased estimator $T:\RR^{np}\to \RR^N$:
$$
\mathbb E([T(\tilde Y_n)][T(\tilde Y_n)]^T)\succeq J_n^{-1}
$$
where $A\succeq B$ means $A-B$ is positive semi-definite. $J_n^{-1}$ is called the Bayesian or Posterior Cram\'er-Rao lower bound  for the filtering problem \cite{Tichavsky}. Most interestingly, in the case where $f$ and $h$ are linear, the prior distribution is Gaussian, and the noises are additive and Gaussian, we have 
$$
J_n=P_{n|n}^{-1}
$$where $P_{n|n}$ is the covariance matrix output by the Kalman filter. Thus, in the linear Gaussian case, $P_{n|n}^{-1}$  reflects  the statistical information available at time $n$ on the state $X_n$. By extension in the SLAM literature $P_{n|n}^{-1}$  is often simply referred to as the information matrix,  in non-linear contexts also, e.g. when using extended information filters  \cite{Thrun}.

\subsection{Application to  IEKF-SLAM consistency}

In the last section, we have recalled that in the linear Gaussian case, the inverse of the covariance matrix output by the Kalman filter {is} the Fisher information available to the filter (this is also stated in \cite{Bar-Shalom} p. 304). In the light of those results, it is natural to expect from any EKF variant, that the inverse of the output covariance matrix $P_{n|n}^{-1}$ reflect an absence of information gain along  unobservable directions indeed. If the filter fails to do so, the output covariance matrix will be too optimistic, that is, inconsistent, and wrong covariances yield wrong gains \cite{Bar-Shalom}. The following theorem shows the linearized system model of the IEKF allows ensuring the desired property of the covariance matrix. It is our second major result.

\begin{thm}
\label{SLAM:big:thm}
Consider the SLAM problem defined by equations \eqref{eq::bicycle_dyn}-\eqref{eq::bicycle_obs} and the IEKF-SLAM Algorithm \ref{algo::IEKF_SLAM}. Let $\delta X_0$ denote a linear combination of  infinitesimal rotations and translations $\delta X_0^R, \delta X_0^1, \delta X_0^2$ of the whole system, as defined in Theorem \ref{SLAM:thm:obs}.  $\delta X_0$ is thus an unobservable shift. If the matrix $P_{n|n}$  output by the IEKF remains invertible, we have at all times:
\begin{align}\left( \delta X_0 \right)^T P_{n|n}^{-1} \left( \delta X_0 \right) \leq \left( \delta X_0 \right)^T P_{n-1|n-1}^{-1} \left( \delta X_0 \right). \label{decrease:eq}
\end{align}
\end{thm}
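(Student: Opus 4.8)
The plan is to follow the two stages of Algorithm \ref{algo::IEKF_SLAM} separately and track the scalar $(\delta X_0)^T P^{-1}(\delta X_0)$ across a propagation followed by an update. Two ingredients are used throughout. First, because $F_n = I_{2K+3}$ in \eqref{eq::A_H_G_IEKF}, the propagated perturbation is stationary: $\delta X_n = F_n\cdots F_1\,\delta X_0 = \delta X_0$ for all $n$. Second, Theorem \ref{SLAM:thm:obs} — applied to each of $\delta X_0^R,\delta X_0^1,\delta X_0^2$ and extended by linearity — gives $H_n\,\delta X_0 = 0$ for every $n$. I also record that since every $P_{\cdot|\cdot}$ is a covariance matrix, ``invertible'' means ``positive definite'', and that $P_{n|n-1} = P_{n-1|n-1} + G_nQ_nG_n^T \succeq P_{n-1|n-1}$, so $P_{n|n-1}\succ 0$ as well.

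For the \textbf{propagation step} I would invoke operator monotonicity of inversion: from $P_{n|n-1}\succeq P_{n-1|n-1}\succ 0$ it follows that $P_{n|n-1}^{-1}\preceq P_{n-1|n-1}^{-1}$, hence
\[
(\delta X_0)^T P_{n|n-1}^{-1}(\delta X_0)\;\leq\;(\delta X_0)^T P_{n-1|n-1}^{-1}(\delta X_0).
\]

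For the \textbf{update step} the point is that the quadratic form is in fact \emph{preserved}: there is no spurious information gain along $\delta X_0$. From $P_{n|n} = (I - K_nH_n)P_{n|n-1}$ and the invertibility hypothesis, the matrix $I - K_nH_n = P_{n|n}P_{n|n-1}^{-1}$ is invertible; and since $H_n\,\delta X_0 = 0$ we have $(I-K_nH_n)\,\delta X_0 = \delta X_0$, so $\delta X_0$ is a fixed point of the inverse as well, $(I-K_nH_n)^{-1}\delta X_0 = \delta X_0$. Therefore $P_{n|n}^{-1}\delta X_0 = P_{n|n-1}^{-1}(I-K_nH_n)^{-1}\delta X_0 = P_{n|n-1}^{-1}\delta X_0$, and in particular $(\delta X_0)^TP_{n|n}^{-1}(\delta X_0) = (\delta X_0)^TP_{n|n-1}^{-1}(\delta X_0)$. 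When $R_n\succ 0$ this is just the information-form identity $P_{n|n}^{-1} = P_{n|n-1}^{-1} + H_n^TR_n^{-1}H_n$ evaluated on $\ker H_n$; the fixed-point phrasing is what avoids assuming $R_n$ invertible. Chaining the two estimates gives $(\delta X_0)^TP_{n|n}^{-1}(\delta X_0) = (\delta X_0)^TP_{n|n-1}^{-1}(\delta X_0)\leq(\delta X_0)^TP_{n-1|n-1}^{-1}(\delta X_0)$, which is \eqref{decrease:eq}.

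The computation is light; the substantive content is already packaged in Theorem \ref{SLAM:thm:obs} and in the identity $F_n = I$, both consequences of working with the invariant error \eqref{xi::non_linear_error}, and the reason the same $\delta X_0$ can simultaneously index the unobservable shift and the covariance direction being probed. The only delicate point I anticipate is the role of the invertibility assumption on $P_{n|n}$: it is precisely what lets me pass from $P_{n|n} = (I-K_nH_n)P_{n|n-1}$ to a statement about $(I-K_nH_n)^{-1}$, hence about $P_{n|n}^{-1}$. Dropping it would force the update step to be rephrased with a Moore--Penrose pseudo-inverse or recovered by a limiting perturbation of $R_n$, which I would not pursue since the theorem already restricts to the invertible case.
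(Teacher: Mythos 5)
Your proof is correct and follows essentially the same route as the paper's: the propagation step decreases the quadratic form because $F_n=I$ turns $P_{n|n-1}=P_{n-1|n-1}+G_nQ_nG_n^T$ into an operator-monotone inequality for the inverses, and the update step leaves it unchanged because $H_n\,\delta X_0=0$. The one genuine difference is in how you justify the update-step equality: the paper invokes the information-form identity $P_{n|n}^{-1}=P_{n|n-1}^{-1}+H_n^TR_n^{-1}H_n$ (which presupposes $R_n$ invertible), whereas you observe that $\delta X_0$ is a fixed point of the invertible matrix $I-K_nH_n$ and hence of its inverse, giving the slightly stronger vector identity $P_{n|n}^{-1}\delta X_0=P_{n|n-1}^{-1}\delta X_0$ without any assumption on $R_n$; this is a small but real gain in generality over the paper's argument.
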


\begin{proof}
As $F_n=I_{2K+3}$ in \eqref{eq::A_H_G_IEKF}, the unobservable shifts remain fixed i.e. $\delta X_n=\delta X_0$. At the propagation step we have:
\begin{align*}
\delta & X_0^T P_{n|n-1}^{-1}  \delta X_0  =   \delta X_{0}^T \left( P_{n-1|n-1}  + G_nQ_nG_n^T \right)^{-1} \delta X_{0} \\
\leqslant & \delta X_{0}^TP_{n-1|n-1}^{-1}\delta X_{0}.
\end{align*}
as $G_nQ_nG_n^T$ is positive semi-definite. And at the update step (see the Kalman Information Filter form in  \cite{Bar-Shalom}) we have:
$$
\delta X_0^T  P_{n|n}^{-1}  \delta X_0  = \delta X_0^T \left[ P_{n|n-1}^{-1} + H_n^T R_n^{-1} H_n \right] \delta X_0 
  = \delta X_0^T  P_{n|n-1}^{-1}  \delta X_0$$
as $H_n \delta X_0=0$ as shown in the proof of Theorem \ref{SLAM:thm:obs}. Thus $(\delta X_0)^T  P_{n|n}^{-1}  (\delta X_0)$ is non-increasing over time $n$. Note that, the proof evidences that if $P_{n|n}$  is not invertible, the results of the theorem still hold, writing the IEKF in information form. 
\end{proof}

Our result essentially means  the \emph{linearized model} of the IEKF has a structure which guarantees that the covariance matrix at all times reflects an absence of  ``spurious"  (Bayesian Fisher) information  gain  over directions that correspond to the unobservable rotations and translations of the global frame.

\section{Simulation results}
\label{Sec:4}

In this section we verify in simulation  the claimed properties on the one hand, and on the other we illustrate the striking consistency improvement  achieved by the IEKF SLAM. To that end, we propose to consider a similar numerical experiment as  in the sound work \cite{huang2010observability} dedicated to  the inconsistency of EKF and   the benefits of the OC-EKF. The IEKF is compared here to the standard EKF, the UKF, the OC-EKF, and the ideal EKF, which is the - impossible to implement - variant of the EKF where the state is linearized about the \emph{true} trajectory.

\begin{figure}
\begin{center}
\includegraphics[width=.60\columnwidth]{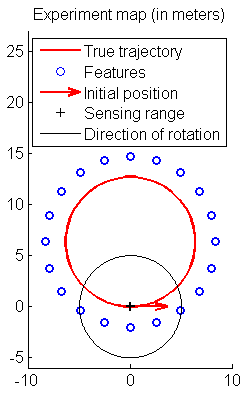}
\end{center}
\caption{ Simulated trajectory : the displayed loop is driven 10 times by a robot able to measure the relative position of the landmarks lying in a range of 5 m around him. Velocity is constant (1m/s) as well as angular velocity (9 deg/s).}
\label{fig::map}
\end{figure}

\begin{figure}[h]
\begin{center}
\includegraphics[width=\columnwidth]{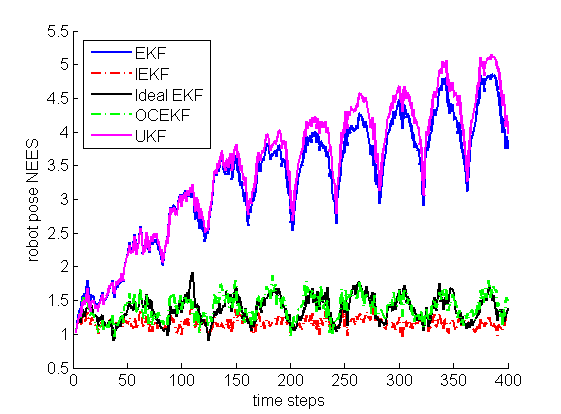}
\end{center}
\caption{ Evaluation of the consistency of the five filters using the NEES indicator (for the entire 3-DoF pose) over 50 runs, in the experimental setting described in Section \ref{sect::exp_setting}. We see the indicator stays around 1 for IEKF SLAM and OC-EKF SLAM over the whole time interval, as expected from a consistent estimation method. The ``ideal" EKF, where the system is linearized on the true value of the state, yields similar results. To the opposite, we see the EKF is inconsistent, and the UKF also.}
\label{fig::NEES}
\end{figure}

\begin{figure}
\begin{center}
\includegraphics[width=\columnwidth]{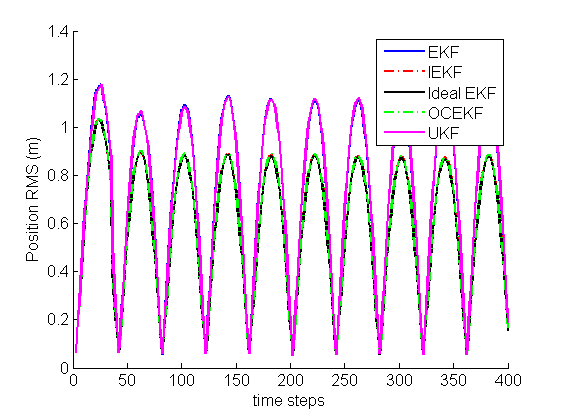}
\includegraphics[width=\columnwidth]{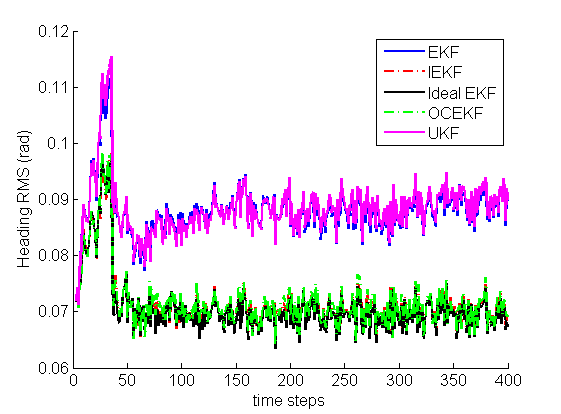}
\end{center}
\caption{
Evaluation of the performance of the proposed IEKF SLAM algorithm, in terms of RMS of the vehicle heading and position error (in rad and m respectively). We see the results are very similar to those of the OC-EKF and "ideal" EKF, the latter being impossible to implement for real (the system is linearized at the true state). These results are much better than those of the UKF and classical EKF.
}
\label{fig::heading_RMS}
\end{figure}

\begin{figure}
\includegraphics[width=.49 \columnwidth]{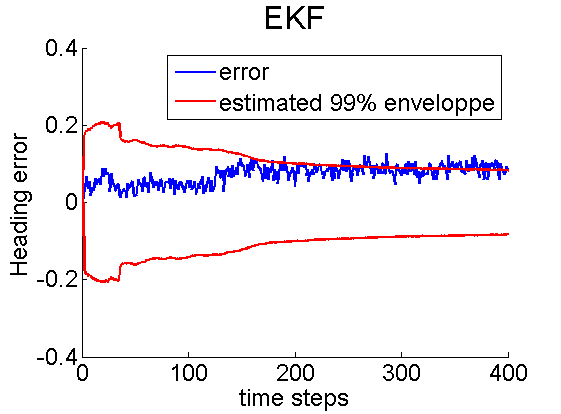}
\includegraphics[width=.49 \columnwidth]{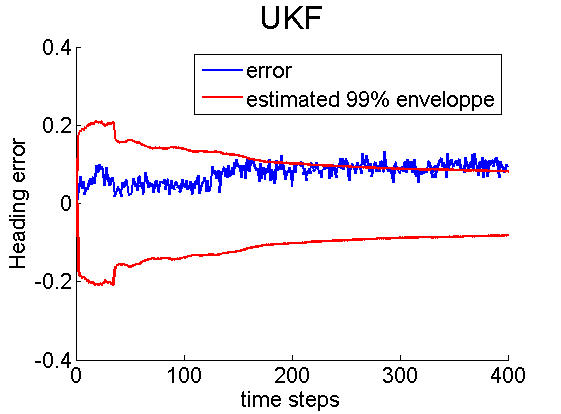}
\includegraphics[width=.49 \columnwidth]{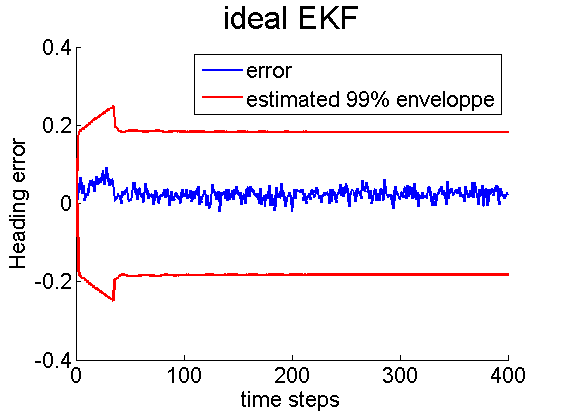}
\includegraphics[width=.49 \columnwidth]{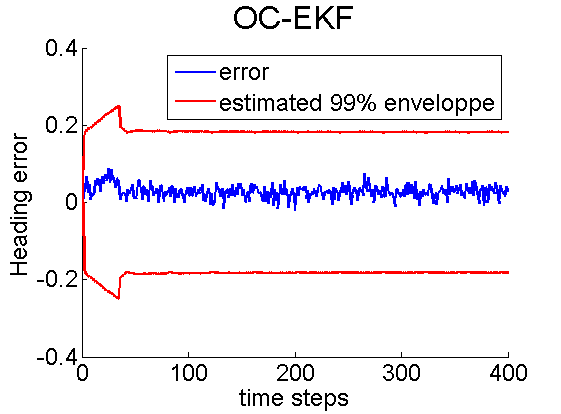}
\includegraphics[width=.49\columnwidth]{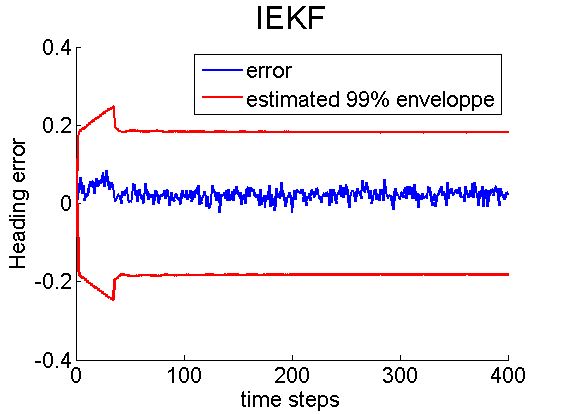}
\caption{Inconsistency   illustrated on a single run. The plotted EKF and UKF heading errors (top plots) do not remain in the $99 \%$ uncertainty envelope computed by the filter. Filters whith theoretical properties regarding non-observable directions (IEKF, OC-EKF and ideal EKF) remedy this problem.
}
\label{fig::EKF_enveloppe}
\end{figure}

\begin{figure}[h]
\includegraphics[width=.49 \columnwidth]{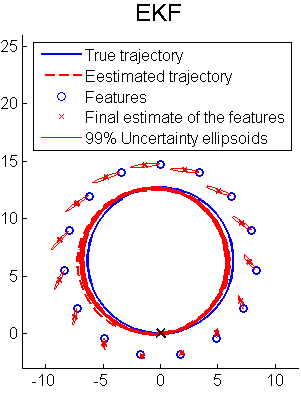}
\includegraphics[width=.49 \columnwidth]{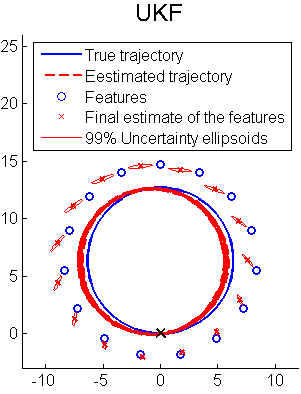}
\includegraphics[width=.49 \columnwidth]{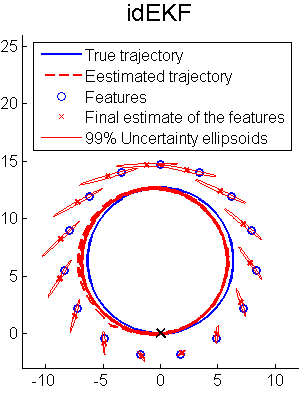}
\includegraphics[width=.49 \columnwidth]{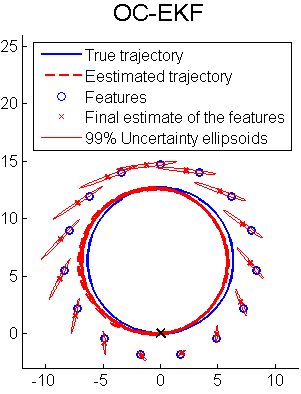}
\includegraphics[width=.49 \columnwidth]{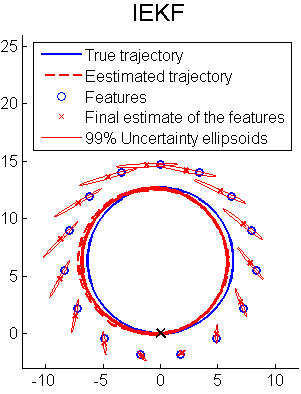}
\caption{ Trajectory and final covariance ellipsoids returned by the implemented filters. We see EKF and UKF are not consistent, mainly because the elongation of ellipsoids, that is related to heading uncertainty, is underestimated. It is not the case with ideal EKF (idEKF), OC-EKF and IEKF.
}
\label{fig::EKF_enveloppe2}
\end{figure}

\begin{figure}[h]
\begin{center}
\includegraphics[width=\columnwidth]{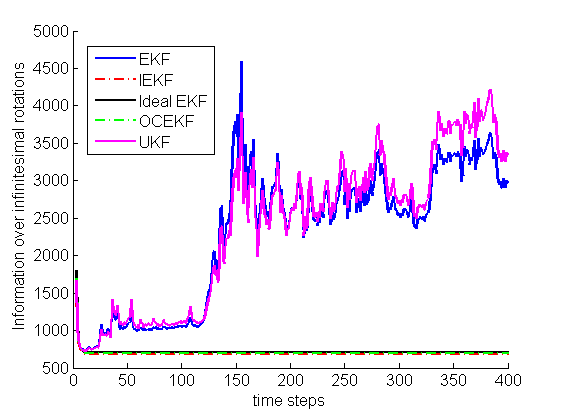}
\includegraphics[width=\columnwidth]{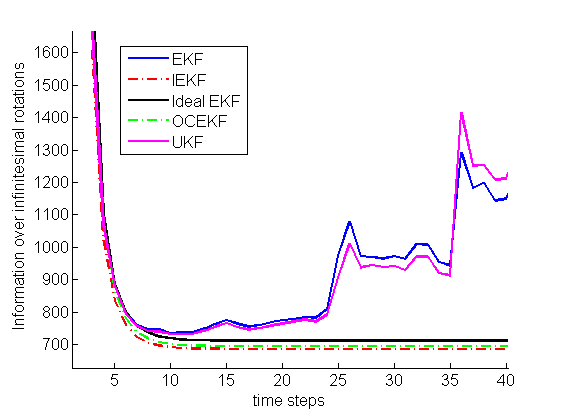}
\end{center}
\caption{Illustration of Theorem \ref{SLAM:big:thm}. Bottom plot is a zoom of the first time steps. The information over an infinitesimal perturbation corresponding to a rotation of the whole system is decreasing for the  IEKF SLAM, which is a consistent behavior as this perturbation is unobservable. Ideal EKF and OCEKF give similar results, but EKF and UKF do not. The plot also confirms EKF and UKF SLAM tend to acquire  spurious information over this unobservable direction.}
\label{fig::info_rot}
\end{figure}

\subsection{Simulation setting}
\label{sect::exp_setting}

The simulation setting we chose is (deliberately) similar to the one used in \cite{huang2010observability}  (Section 6.2). The vehicle (or robot) drives a 15m-diameter loop ten times in the 2D plane, finding on its path 20 unknown features as displayed on Figure \ref{fig::map}. The velocity and angular velocity are constant (1 m/s and 9 deg/s respectively). The relative position of the features in the reference frame of the vehicle is observed once every second (where $\tilde h$ of \eqref{eq::bicycle_obs} is the  identity). The standard deviation $\sigma$ of the velocity measurement on each wheel of the vehicle is $2 \%$ of the velocity. This yields the standard deviation $\sigma_v$ of the resulting linear velocity and $\sigma_{\omega}$ of the rotational velocity: $\sigma_v=(\sqrt 2/2)\sigma$ and $\sigma_{\omega} = (\sqrt 2/a) \sigma$, where $a=0.5m$ is the distance between the drive wheels (see \cite{huang2010observability}). The features are visible if they lie within a sensing range of 5 m, in which case they are observed with an isotropic noise of standard deviation 10 cm. The initial uncertainty over position and heading is zero -  which will prove a condition not sufficient to prevent failure of the EKF. Each time a landmark is seen for the first time, its position is initialized in the earth frame using the current estimated pose of the robot, the associated uncertainty is set to a very high value compared to the size of the map, then a Kalman update is performed to correlate the position of the new feature withe the other variables. Each second, all visible landmarks (i.e. those in a range of 5m) are processed simultaneously in a stacked observation vector.

Five algorithms are compared:
\begin{enumerate}
\item The classical EKF, described in Algorithm \ref{algo::EKF_SLAM_linear}.
\item The proposed IEKF SLAM algorithm  described in Algorithm \ref{algo::IEKF_SLAM}.
\item The ideal EKF as defined in \cite{huang2010observability}, i.e., a classical EKF where the Riccati equation  is computed at the true trajectory of the system instead of the estimated trajectory. Although not usable in practice, the latter is a good reference to compare with, as it is supposed to be  an EKF  with consistent behavior.
\item The OC-EKF described in \cite{huang2010observability}, which is so far the the only method that guarantees the non-observable subspace has appropriate dimension.
\item The Unscented Kalman Filter (UKF), known to  better deal with the non-linearities than the  EKF.
\end{enumerate}

Before going further, the next subsection introduces the NEES indicator used in the simulations to measure the consistency of these methods.

\subsection{The NEES indicator}
Classical criteria used to evaluate the performance of an estimation method, like Root Mean Squared (RMS) error do not inform about consistency as they do not take into account the uncertainty returned by the filter. This point is addressed by the Normalized Estimation Error Squared (NEES), which computes the average squared value of the error, normalized by the covariance matrix of the EKF. For a sample $(X_i)_{i=1,p}$ of error values having dimension $d$, each of them   with a covariance matrix $P_i$ of size $d\times d$, the NEES is defined by:
$$
\text{NEES} = \frac{1}{p\times d} \sum_{i=1}^p X_i^T P_i^{-1} X_i.
$$
If each $X_i$ is a zero mean Gaussian  with covariance matrix $P_i$, then for large $p$ we have NEES $\approx 1$. The case NEES $>1$ reveals an inconsistency   issue: the actual uncertainty is higher than the computed uncertainty. This situation typically occurs when the filter is optimistic as it believes to have gained information over a non-observable direction. The NEES indicator will be used, along with the usual RMS, to illustrate our solution to SLAM inconsistency in the sequel.

\subsection{Numerical results}
\label{sect::results}

Figure \ref{fig::NEES} displays the NEES indicator of the vehicle pose estimate (heading and position) over time, computed for 50 Monte-Carlo runs of the experiment described in Section \ref{sect::exp_setting}. As expected, the profile of the NEES for classical EKF, ideal EKF and OC-EKF is the same as in the previous paper \cite{huang2010observability} which inspired this experimental section. Note that we used here a normalized version of the NEES, making its swing value equal to 1. We see also that the result is similar for OC-EKF SLAM and ideal EKF SLAM: the NEES varies between 1 and 1.7, in contrast to the EKF SLAM and UKF SLAM which exhibit large inconsistencies over the robot pose We see here that the IEKF remedies inconsistency, with a NEES value that remains close to 1. Note that, it performs here even better than OC-EKF and ideal EKF (whose results are very close to each other), in terms of consistency. The basic difference between IEKF and these filters lies in using or not the current estimate as a linearization point. Uncertainty directions being very dependent from the estimate, what Figure \ref{fig::NEES} suggests is that they may not be correctly captured if computed on a different point.
The other aspect of the evaluation of an EKF-like method is performance: regardless of the relevance of the covariance matrix returned by the filter (i.e. consistency), pure performance can be evaluated through RMS of the heading and position error, whose values over time are displayed in Figure \ref{fig::heading_RMS}. They confirm an expectable result: solving consistency issues  improves the accuracy of the estimate as a byproduct, as wrong covariances yield wrong gains \cite{Bar-Shalom}. 


Selecting a \emph{single run}, we can also illustrate the inconsistency issue in terms of covariance and information. Figure \ref{fig::EKF_enveloppe} displays the heading error for EKF, UKF, IEKF, OC-EKF and ideal EKF SLAM, and the $99 \%$ envelope returned by each filter. This illustrates both the false observability issue and the resulting inconsistency of EKF and UKF: the heading uncertainty is reduced over time while the estimation error goes outside the $99 \%$ envelope. To the opposite, the behavior of IEKF, OC-EKF and ideal EKF is sound. Figure \ref{fig::EKF_enveloppe2} shows the map and the landmarks $99 \%$ uncertainty ellipsoids: Similarly, both EKF and UKF fail capturing the true landmarks 'positions within the $99 \%$ ellipsoids whereas the three over filters succeed to do so. Finally, Figure \ref{fig::info_rot} displays the evolution of the information over a shift corresponding to infinitesimal rotations as defined in Theorem \ref{SLAM:big:thm}, that is  the evolution over time of the quantity $\delta X_n^TP_{n|n}^{-1}\delta X_n$. The theorem is successfully illustrated: the latter quantity is always decreasing for the IEKF, ideal EKF, OC-EKF but not for the EKF and the (slightly better to this respect) UKF.

\section{Conclusion}

This work evidences that the EKF algorithm for SLAM is not inherently inconsistent - at least regarding inconsistency related to unobservable transformations of the global frame - but the choice of the right coordinates for the linearization process is pivotal. We showed that applying the recent theory of the IEKF - an EKF (slight) variant - leads to provable properties regarding observability and consistency. Extensive Monte-Carlo simulations have illustrated the consistency of the new method and the striking improvement over  EKF,  UKF, OC-EKF, and more remarkably over the ideal EKF also, which is the - impossible to implement - variant of the EKF where the system is linearized about the true trajectory.

Note that, the IEKF approach may prove relevant beyond SLAM to some other problems in robotics   as well, such as autonomous navigation (see \cite{barczyk2015invariant}),  and in combination with controllers,   notably for motion planning purposes,  see \cite{diemer2014invariant}. In \cite{barraunolcos} the IEKF has proved to possess \emph{global} asymptotic convergence properties on a simple localization problem of a wheeled robot, which is a strong property. The IEKF has also been patented for navigation with inertial sensors \cite{brevet_alignement}.

Nowadays nonlinear optimization based SLAM algorithms are becoming
popular as compared with EKF SLAM, see e.g. \cite{Dellaert} for one of the first papers on the subject. We yet anticipate  a simple EKF SLAM with consistency properties will prove useful to the research community, the EKF SLAM having been abandoned in part due to its inconsistency.  The general EKF has proved useful in numerous industrial applications, especially in the field of guidance and navigation. It has the benefits of being 1-recursive, avoiding to store the whole trajectory and 2-suited to on-line  real-time applications. Moreover the aerospace and defense industry has developed a corpus of experience for its industrial implementation and validation. And the IEKF is a variant that, being in every respect similar to EKF,  retains all its advantages, but which possesses additional guaranteed properties.  Note also that, all the improvements of the EKF for SLAM such as e.g., the SLAM of \cite{Paz} and sparse extended information filters \cite{Thrun}, can virtually be turned into their invariant counterpart.

The high dimensional optimization formulation of the SLAM problem being prone to local minima,  having an accurate initial value (i.e. a small initial estimation error) is very critical \cite{Zhao}. The IEKF SLAM algorithm proposed in the present paper may thus be advantageously used to initialize those methods in challenging situations. 

Besides, we anticipate  our approach based on symmetries could help improve (at least first order) optimization techniques for SLAM.  To understand why, assume by simplicity the sensors to be noise free. Then, moving a candidate trajectory along unobservable directions will not change the cost function, and an efficient optimization algorithm should account for this. And when a gradient descent algorithm is used, only a first-order expansion of the cost function is considered. Our Lie group approach will allow defining steepest descent directions in a alternative geometric way,   that will ``stick'' to the unobservable directions, and the corresponding update will  move along the (Lie group) state space  in a non-linear yet relevant way.  This issue is left for future work, but a thorough understanding of the interest of the invariant approach for the EKF, is a first step in this direction.

\subsection*{Acknowledgements}The authors would like to thank Cyril Joly for his advice.

\appendix

\section{IEKF theory, applications to 2D and 3D SLAM}

In this section we provide more details on the IEKF theory on matrix Lie groups, and show how the underlying Lie group structure  of the SLAM problem has been used indeed to build the IEKF SLAM Algorithm \ref{algo::IEKF_SLAM}.  We also provide the IEKF equations for 3D SLAM. For more information on the IEKF see \cite{barrau2014invariant} and references therein.

\subsection{Primer on matrix Lie groups}\label{primer}

A matrix Lie group $G$ is a subset of square invertible $\bar N \times \bar N$ matrices  $ \mathcal{M}_{\bar N}(\RR)$ verifying the following properties:
\[
I_{\bar N} \in G, \qquad \forall g \in G, g^{-1} \in G , \qquad \forall a,b \in G, ab \in G
\]
where $I_{\bar N}$ is the identity matrix of $\RR^{\bar N}$. 
If $\gamma(t)$ is a curve over $G$ with $\gamma(0)=I_{\bar N}$, then its derivative at $t=0$ necessarily lies in a subset $\mathfrak{g}$ of  $\mathcal{M}_{\bar N}(\RR)$. $\mathfrak{g}$ is a vector space and it is called the Lie algebra of $G$. It has same dimension $\text{dim }\frak{g}$ as $G$. Thanks to a  {linear} invertible map denoted by  $\mathcal{L}_{\frak{g}}: \RR^{\dim \frak{g}} \rightarrow \frak{g}$,  one can advantageously identify $\mathfrak{g}$ to  $\Rg$. Besides, the vector space $\mathfrak{g}$ can be mapped to the matrix Lie group $G$ through the classical matrix exponential $\exp_m$. Thus, $\RR^{\dim \mathfrak{g}}$ can be mapped to $G$ through the Lie exponential map    defined by $\exp(\xi):=\exp_m(\mathcal{L}_{\mathfrak{g}}(\xi))$ for $\xi\in\Rg$.  This map is   invertible for small $\xi$, and we have $(\exp(\xi))^{-1}=\exp(-\xi)$. The well-known Baker-Campbell-Hausdorff (BCH) formula gives a  series expansion for the product $\exp(\xi)\exp(\zeta)$. In particular it ensures $\exp(\xi)\exp(\zeta)=\exp(\xi+\zeta+T)$, where $T$ is of the order $O(\norm{\xi}^2,\norm{\zeta}^2,\norm{\xi}\norm{\zeta})$.  For any $g\in G$, the adjoint matrix $Ad_g\in\RR^{\text{dim } \frak g\times \text{dim } \frak g}$ is defined by $g\exp(\zeta)g^{-1}=\exp(Ad_g \zeta)$ for all $\zeta\in \mathfrak{g}$.  We now give explicit formulas for two groups of particular interest for the SLAM problem. 

\subsubsection{Group of direct planar isometries $SE(2)$}
\label{sect::tuto_SE2}
 This famous group in robotics can be defined using homogeneous matrices, i.e.,  $
G=SE(2):= \left\lbrace g= \begin{pmatrix} R(\theta) & x \\ 0_{1,2} & 1 \end{pmatrix};\theta \in\RR, ~x\in\RR^2 \right\rbrace$.  \text{Let}~$\zeta=\begin{pmatrix} \alpha \\ u \end{pmatrix}$, then $  \mathcal{L}_{\mathfrak{se}(2)} (\zeta) =\begin{pmatrix} \alpha J&u\\0_{1\times 2}&0 \end{pmatrix}$, where $\alpha\in\RR$ and $u\in\RR^2$. We have $ { \mathfrak{g}=\frak{se}(2)= \left\lbrace \mathcal{L}_{\mathfrak{se}(2)} (\zeta),\zeta\in\RR^3\right\rbrace}$. The Lie exponential writes $
\exp(\zeta) = \begin{pmatrix}R(\alpha) & B(\alpha) u \\ 0_{1,2} & 1 \end{pmatrix}$ where  $B(\alpha) = \begin{pmatrix} \frac{ \sin \left( \alpha \right) }{ \alpha } & - \frac{ 1 - \cos \left( \alpha \right) }{ \alpha } \\ \frac{1-\cos \left( \alpha \right) }{ \alpha } &  \frac{ \sin \left( \alpha \right) }{ \alpha } \end{pmatrix}$. We have  $Ad_{g}=\begin{pmatrix} 1&0_{1\times 2}\\-Jx&R(\theta) \end{pmatrix}$. 

\subsubsection{Group of multiple direct spatial isometries $SE_{K+1}(2)$}
\label{sect::tuto_SE23}We now introduce a simple extension of $SE(2)$, inspiring from  preliminary remarks in \cite{bonnabel2012symmetries,barrau2014invariant}. For $\theta\in\RR$ and $x,p^1,\cdots,p^K\in\RR^2$, consider the map  $\Psi:\RR^{2K+3}\to \mathcal M_{K+3}(\RR)$ defined by
\begin{equation}\label{Psi:map}
\Psi: (\theta,x,p^1,\cdots,p^K) \mapsto\left(
\begin{array}{c|c}
  R(\theta) & x~  p^1~\cdots~p^n \\ \hline
  0_{1,2}  & \raisebox{-15pt}{{\mbox{{$I_{K+1}$}}}} \\[-4ex]
  \vdots & \\[-0.5ex]
  0_{1,2}  &
\end{array}
\right)
\end{equation}
and let $G\subset \RR^{(K+3)\times (K+3)}$ be defined by
$$G=\left\lbrace  
\Psi(\theta,x,p^1,\cdots,p^K);  \theta \in \RR, ~ x,p^1,\cdots,p^K \in \RR^2  \right\rbrace$$and denote it by  $G=SE_{K+1}(2)$. Note that,  we recover $SE(2)$ for $K=1$, i.e.,  $SE(2)=SE_1(2)$. 
Letting $\alpha\in\RR$ and 
$\zeta=\begin{pmatrix}\alpha\\x\\p^1\\\vdots\\p^K\end{pmatrix}$ yields $\mathcal{L}_{\frak{se}_{K+1}(2)} (\zeta)=\left(
\begin{array}{c|c}
  \alpha J & x~  p^1~\cdots~p^n \\ \hline
  0_{1,2}  & \raisebox{-15pt}{{\mbox{{$0_{(K+1)\times(K+1)}$}}}} \\[-4ex]
  \vdots & \\[-0.5ex]
  0_{1,2}  &
\end{array}
\right)$
and  $\frak g=\frak{se}_{K+1}(2)= \left\lbrace  \mathcal{L}_{\frak{se}_{K+1}(2)} (\zeta);  \zeta\in\RR^{2K+3}  \right\rbrace$. It turns out, by extension of the $SE(2)$ results, that there exists a closed form for the Lie exponential $\exp=\exp_m\circ \mathcal{L}_{\frak{se}_{K+1}(2)}$ that writes   $\exp(\zeta)=\Psi(\alpha,B( \alpha) x,B( \alpha)p^1,\cdots,B( \alpha)p^K))$ with $B(\alpha) = \begin{pmatrix} \frac{ \sin \left( \alpha \right) }{ \alpha } & - \frac{ 1 - \cos \left( \alpha \right) }{ \alpha } \\ \frac{1-\cos \left( \alpha \right) }{ \alpha } &  \frac{ \sin \left( \alpha \right) }{ \alpha } \end{pmatrix}$. The  $Ad$ is also easily derived by extension of $SE(2)$, but to save space, we only display it once:  $Ad_{\Psi(\hat \theta,\hat x,\hat p^1,\cdots,\hat p^K)}$ is defined as the matrix $G_n$  of eq \eqref{eq::A_H_G_IEKF}.

\subsection{Statement of the general IEKF equations}\label{gen:IEKF}

This section is a summary of the IEKF methodology of \cite{barrau2014invariant,barrau2013intrinsic}. Let $G$ be a matrix Lie group. Consider a general dynamical system $\chi_n\in G\subset\RR^{{\bar N}\times {\bar N}}$ on the group, associated to a sequence of observations $(Y_n)_{n \geqslant 0}\in\RR^p$, with equations as follows :
\begin{equation}
\label{eq::general_dynamical_system_Lie}
\chi_{n} = \chi_{n-1}\exp(w_n)\Gamma_n
\end{equation}
\begin{equation}\label{eq::general_dynamical_system_output_Lie}
Y_n = h(\chi_n)+V_n,
\end{equation}
where $\Gamma_n\in G$ is an input matrix which encodes the displacement according to the evolution model,  $w_n\in \Rg$ is a vector encoding the  model  noise, $h:G\to\RR^p$ is the observation function and $V_n\in\RR^p$ the measurement noise.

The IEKF propagates an estimate obtained after the previous observation $Y_{n-1}$ through the deterministic part of \eqref{eq::general_dynamical_system_Lie}:
\begin{equation}\label{eq::obs_gen_Lie}
\hat \chi_{n|n-1} = \hat\chi_{n-1|n-1} \Gamma_n\end{equation}
To update $\hat\chi_{n |n-1}$ using the new observation $Y_n$, one has to consider an estimation error that is \emph{well-defined} on the group. In this paper we will use the following right-invariant errors 
\begin{equation}\label{eq::right-error}
\eta_{n-1|n-1}:=\chi_{n-1}\hat\chi_{n-1|n-1}^{-1},\quad \eta_{n|n-1}:=\chi_{n}\hat\chi_{n|n-1}^{-1}
\end{equation} which are equal to $I_{\bar N}$ when $\hat\chi =\chi$. The terminology stems from the fact they are invariant to right multiplications, that is, transformations of the form $\hat\chi_{n|n-1}\mapsto \hat\chi_{n|n-1}g,~\chi_n\mapsto \chi_n g$ with $g\in G$. Note that, one could alternatively consider   left-invariant errors but it turns out to be  less fruitful for  SLAM.

\subsubsection{Linearized error equations over the group $G$}The IEKF update   is based upon a first-order expansion of the non-linear system associated to the errors \eqref{eq::right-error} around $I_{\bar N}$.  First, compute the full error's evolution  
\begin{align*}\eta_{n|n-1}&=\chi_{n}\hat\chi_{n|n-1}^{-1}\\&=\chi_{n-1}\exp(w_n)\Gamma_n\Gamma_n^{-1}\hat\chi_{n-1|n-1}^{-1}\\&=\chi_{n-1}\exp(w_n)\hat\chi_{n-1|n-1}^{-1}\\&= \chi_{n-1}\hat\chi_{n-1|n-1}^{-1}\hat\chi_{n-1|n-1}\exp(w_n)\hat\chi_{n-1|n-1}^{-1}\\&=\eta_{n-1|n-1}\exp(Ad_{\hat\chi_{n-1|n-1}}w_n)\end{align*}Note that the term $\Gamma$ has disappeared ! This is a key property for the successes of the invariant filtering approach \cite{bonnabel2008symmetry,barrau2014invariant}. 
  To linearize this equation we define $\xi_{n-1|n-1},\xi_{n|n-1}\in\Rg$ around $I_{\bar N}$ through
  \begin{equation}\label{xxi} 
  \eta_{n-1|n-1}=\exp(\xi_{n-1|n-1}),\quad  \eta_{n|n-1}=\exp(\xi_{n|n-1}),
  \end{equation}As in the standard non-additive noise EKF methodology \cite{Stengel} all terms of order $\norm{\xi_{n-1|n-1} }^2, \norm{w_n}^2,\norm{w_n}\norm{\xi_{n-1|n-1}}$, are assumed small and are neglected. Using the BCH formula, and neglecting the latter terms, we get 
  $$
  \exp(\xi_{n|n-1})=\exp(\xi_{n-1|n-1}+Ad_{\hat\chi_{n-1|n-1}}w_n)
  $$ 
 Using the local  invertibility  of  $\exp$   around $0_{\text{dim }\frak g\times 1}$, we get the following linearized error evolution in $\Rg$: 
\begin{equation}\label{def:A_non_linear_Lie}
\xi_{n|n-1}  =F_n\xi_{n-1|n-1} +G_n  w_n,
\end{equation}
where $F_n=I_{\text{dim }\frak g}$ and  $G_n=Ad_{\hat\chi_{n-1|n-1}}$.

To linearize  the output error,  we now slightly adapt the IEKF theory \cite{barrau2014invariant}   to account for the general form of output \eqref{eq::bicycle_obs}. Note that, $Y_n - h(\hat \chi_{n|n-1}) =h( \chi_{n})-h(\hat \chi_{n|n-1})+V_n=h(\exp( \xi_{n|n-1})\hat \chi_{n|n-1})-h(\hat \chi_{n|n-1})+V_n$. As $\xi_{n|n-1}$ is assumed small, and as $\exp(0_{\text{dim }\frak g\times 1})=I_{\bar N}$, a first-order  Taylor expansion in   $\xi\in\Rg$  arbitrary, allows definit $H_n$ as follows 
\begin{equation}
\label{def:H_non_linear_Lie}
\begin{aligned}
h( \exp(\xi)\hat \chi_{n|n-1})- h(\hat \chi_{n|n-1}) := H_n  \xi+O(\norm{\xi}^2)
\end{aligned}
\end{equation}

\subsubsection{Computing the Kalman gain} The gain is   computed  as for the standard EKF \cite{Stengel}, but where the linearized error system to be considered is  \eqref{def:A_non_linear_Lie}-\eqref{def:H_non_linear_Lie}. 
\subsubsection{Update}
As in the standard theory, the Kalman gain matrix $K_n$ allows computing an estimate of the linearized error after the observation $Y_n$ through $\xi_{n|n}= K_nz_n$, where $z_n =Y_n - h(\hat \chi_{n|n-1})$. Recall the state estimation errors  defined by \eqref{eq::right-error}-\eqref{xxi} are of the form $\exp(\xi)= \chi\hat \chi^{-1}$, that is, $\chi=\exp(\xi)\hat\chi$.  Thus  an estimate of  $\chi_n$ after observation $Y_n$ which is consistent with \eqref{eq::right-error}-\eqref{xxi}, is obtained through the following  Lie group counterpart of the linear update \eqref{eq::update_linear} 
\begin{equation}\label{exp:eeq}
\hat \chi_{n|n}=\exp(K_nz_n)\hat \chi_{n|n-1}
\end{equation}
The equations of the filter are detailed in Algorithm \ref{algo::IEKF}.

\begin{algorithmic}
\begin{algorithm}
\caption{Invariant Extended Kalman Filter (IEKF)}
Choose initial  $\hat \chi_{0|0}\in G$ and $P_{0|0}\in\RR^{\dim \frak g\times \dim \frak g}=\Cov(\xi_{0|0})$
\label{algo::IEKF}
\LOOP
\STATE Define $H_n$ as in   \eqref{def:H_non_linear_Lie} and let $F_n=I_{\dim \frak g}$ and $G_n=Ad_{\hat\chi_{n|n-1}}$.
\STATE Define $Q_n$ as $\Cov(w_n)$ and $R_n$ as $\Cov(V_n)$.
\STATE \textbf{Propagation} 
\STATE $\hat \chi_{n|n-1} = \hat\chi_{n-1|n-1} \exp(u_n)$
\STATE $P_{n|n-1} =  F_nP_{n-1|n-1} F_n^{-1}  +   G_nQ_nG_n^T  $
\STATE \textbf{Update} 
\STATE $  z_n=Y_n- h \left( \hat \chi_{n|n-1} \right)$
\STATE $  S_n = H_n  P_{n|n-1}H_n^T + R_n $,
 \STATE $
  K_n = P_{n|n-1} H_n^T S_n^{-1}$
  \STATE $P_{n|n} = [I-K_n H_n]P_{n|n-1} $
\STATE{ $ \hat \chi_{n|n} = \exp(K_nz_n)\hat \chi_{n|n-1}$}
\ENDLOOP
\end{algorithm}
\end{algorithmic}

\subsection{Lie group based derivation of the 2D IEKF-SLAM}
\label{sect::IEKF_SLAM_details}

In this section we show step by step the IEKF-SLAM Algorithm \ref{algo::IEKF_SLAM} is a strict application of Algorithm \ref{algo::IEKF}.
\subsubsection{Underlying Lie group} The Lie group that underlies the SLAM problem, is $G=SE_{K+1}(2)$ introduced in Appendix  \ref{sect::tuto_SE23}. Let us  apply the general theory of the IEKF to this group. To define the Lie group counterpart $\chi_n$ of the state $X_n$ defined by  \eqref{state:def}, we let $\chi_n=\Psi(X_n)$.  The model equations \eqref{eq::bicycle_dyn} write 
$$\chi_n=\chi_{n-1}\tilde\Gamma_n
$$
with $\tilde\Gamma_n=\Psi(\omega_n+w_n^\omega, (v_n+w_n^v)^T,0_{1\times 2}\cdots,0_{1\times 2})^T$. At the propagation step, the IEKF propagates the estimate through the corresponding deterministic equations
$$\hat\chi_{n|n-1}=\hat\chi_{n-1|n-1}\Gamma_n
$$with $\Gamma_n=\Psi(\omega_n, v_n^T,0_{1\times 2}\cdots,0_{1\times 2})^T$.  
\subsubsection{Right-invariant error \eqref{eq::right-error}}  A simple matrix multiplication shows that $\eta_{n|n-1}=\chi_n\hat\chi_{n|n-1}^{-1}=\Psi(e_{n|n-1})$, where
\begin{equation}
\label{eq::non_linear_error}
e_{n|n-1} = \begin{pmatrix}
\theta_n - \hat \theta_{n|n-1} \\
x_n-  [R \left(  \theta_n -{\hat \theta}_{n|n-1}\right) \hat x_{n|n-1}]  \\
  p^1_n-[R \left(  \theta_n -{\hat \theta}_{n|n-1}\right)   \hat p^1_{n|n-1}]
  \\
 \vdots \\
  p^K_n- [R \left(  \theta_n -{\hat \theta}_{n|n-1}\right)    \hat p^K_{n|n-1}]
\end{pmatrix}
\end{equation}and where $e_{n|n}$ is defined analogously.
\subsubsection{Linearized error}\label{lnz:sec} The linearized error is defined by \eqref{xxi}, that is, $\eta_{n|n-1}=(\exp(\xi_{n|n-1}))$. As terms of order $O(\norm{\xi_{n|n-1}}^2)$ are to be neglected in the linearized equations, it suffices to compute a first-order approximation of ${\xi_{n|n-1}}$. First note that 
$\xi_{n|n-1}$ defined by \eqref{xi::non_linear_error} is a linear approximation to $e_{n|n-1}$, that is, $e_{n|n-1}=\xi_{n|n-1}+O(\norm{\xi_{n|n-1}}^2)$. This readily implies
$$
 I_{2K+3}+\mathcal{L}_{\frak{se}_{K+1}(2)}(\xi_{n|n-1})+O(\norm{\xi_{n|n-1}}^2)=\Psi(e_{n|n-1})$$Recalling $ I_{2K+3}+\mathcal{L}_{\frak{se}_{K+1}(2)}(\xi)+O(\norm{\xi}^2)=\exp_m{(\mathcal{L}_{\frak{se}_{K+1}(2)}(\xi))}:=\exp(\xi)$ we see \eqref{xi::non_linear_error}  is a first approximation of ${\xi_{n|n-1}}$ as defined in \eqref{xxi} indeed. 
\subsubsection{Linearized error propagation} 
The way the linearized error \eqref{xi::non_linear_error} propagates has already been computed and consists of \eqref{first:linearized}, which is the same as the first equation of \eqref{def:A_non_linear_Lie_2}. It exactly  matches  what can be expected from the general theory, that is, equation 
\eqref{def:A_non_linear_Lie}, recalling that $G_n$ of \eqref{eq::A_H_G_IEKF} is the map $Ad_{\hat\chi_{n-1|n-1}}$ of the group $G=SE_{K+1}(2)$ indeed. 
\subsubsection{Linearized output map}
Note the general definition of $H_n$ of \eqref{def:H_non_linear_Lie} here boils down to the one of  Section \ref{Sec:22} as here $\hat\chi=\Psi(\hat X)$, $\exp(\xi)\hat\chi=\chi=\Psi(X_n)$ and $\xi$ is given by \eqref{xi::non_linear_error}.
\subsubsection{Estimate update}Referring to Appendix \ref{sect::tuto_SE23} and the definition of exponential map of $G=SE_{K+1}(2)$, a simple matrix multiplication shows  that 
$\exp(\xi)\Psi(\hat X)=\varphi(\xi,\hat X)$ with $\varphi$ defined by \eqref{eq::exp_SE2} indeed.

\subsection{Equations of the IEKF-SLAM in 3D }

Extending the group $SE_{K+1}(2)$ to the 3D case, and applying the general IEKF theory of Section \ref{gen:IEKF} , we  derive in the present section an IEKF for 3D SLAM. Due to space limitations an as it is not the primary object of the present paper we pursue extreme brevity of exposure. See also \cite{barrau2013intrinsic,barrau2014invariant}. Note that, although the 3D SLAM equations make use of rotation matrices,   they are in fact totally intrinsic:  When using quaternions (recommended) or Euler angles (not recommended) they write the same as the group $SE_{K+1}(3)$ we introduce does in fact not depend on a specific representation of rotations. 

\subsubsection{3D SLAM model}

The equations of the robot in 3D and in continuous time write:
\begin{equation}
\label{eq::bicycle_dyn_3D-ct}
\begin{aligned}
 \dotex R_t & = R_{t} (\omega_t+w_t^\omega)_\times, \quad 
 \dotex x_t  =R_{t} (v_t+ w_t^v),\\
 \dotex p_t^j & = p_t^j,\quad 1\leq j\leq K
\end{aligned}
\end{equation}
where $R_t\in SO(3)$ is a rotation matrix that represents the robot's orientation at time $t$, $\omega_t\in \RR^3$ denotes the angular velocity of the robot measured by a gyrometer or by odometry (in combination with a unicycle model for a terrestrial vehicle),  $v_t \in \mathbb{R}^3$ the  velocity in the robot's frame, and $p_t^j\in \mathbb{R}^3$ is the position of landmark $j$, and  where  $(b)_\times$ for $b\in\RR^3$ denotes the skew symmetric matrix of $\RR^{3\times 3}$ such that for any $x\in\RR^3$ we have $(b)_\times x=b\times x$. Finally $w_t^\omega$ and $w_t^v$ denote (resp.) the noise on angular and linear velocities. Although the theory of IEKF could very well be applied directly to  this continuous time dynamics as in  \cite{barrau2014invariant}, we apply it here to a discretized model, to be consistent with the rest of the article. Although exact discretization of the noisy model on the group is beyond reach \cite{barrau2013intrinsic}, letting  $\Delta t$ be the time step, the following first-order integration scheme  is widely used:
\begin{equation}
\label{eq::bicycle_dyn_3D}
\begin{aligned}
R_n & = R_{n-1} \exp_m[(w_n^\omega)_\times]\Omega_n, \quad
 x_n  =x_{n-1}+R_{n-1} (v_n+ w_n^v), \\
 p_n^j & = p_{n-1}^j,\quad 1\leq j\leq K
\end{aligned}
\end{equation}
where the increments $\Omega_n,v_n$ are obtained solving the noise-free initial conditions during the $n$-th time step with initial condition $R=I_3,x=0$,  and where the following discrete noise
\begin{equation}
\label{eq::Qn_3D}
w_n^T:=( (w_n^{\omega})^T,(\ w_n^v)^T,0_{1\times 3K})^T,\quad Q_n=  \mathbb{E}(w_nw_n^T)
\end{equation}
is obtained by integration of the corresponding white noises. Note that, this scheme  is accurate  to first-order terms in $\Delta t$.  
 A general landmark observation in the car's frame reads:
\begin{equation}
\label{eq::bicycle_obs_3D}
Y_n = \begin{pmatrix} 
\tilde h \left[ R_n^T \left( p^1-x_n \right) \right]+V_n^1 \\
\vdots \\
\tilde h \left[ R_n^T \left( p^K-x_n \right) \right]+V_n^K 
\end{pmatrix}
\end{equation}
where $Y_n \in \RR^{3 K}$ (or $\RR^{2K}$ for monocular visual SLAM) is the observation of the features at time step $n$,  and $V_n$ the observation noise. We let the output noise covariance matrix be $\tilde R_n\in\RR^{3K\times 3K}$ (not to be confused with the rotation $R_n$). 

\subsubsection{Underlying Lie group}  The Lie group that underlies the problem is the group $G=SE_{K+1}(3)$ that we introduce as follows. For $R\in SO(3)$  and $x,p^1,\cdots,p^K\in\RR^3$   let 
\begin{equation}\label{Psi:map_3D}
\Psi: (R,x,p^1,\cdots,p^K) \mapsto\left(
\begin{array}{c|c}
  R & x~  p^1~\cdots~p^n \\ \hline
  0_{1,3}  & \raisebox{-15pt}{{\mbox{{$I_{K+1}$}}}} \\[-4ex]
  \vdots & \\[-0.5ex]
  0_{1,3}  &
\end{array}
\right)
\end{equation}
and let  $G\subset \RR^{(K+4)\times (K+4)}$ be defined as 
$$G=\left\lbrace \Psi(R,x,p^1,\cdots,p^K) ; R\in SO(3), ~ x,p_1,\cdots,p^K \in \RR^3  \right\rbrace$$and denote it by  $G=SE_{K+1}(3)$.   We then have $\mathcal{L}_{\frak{se}_{K+1}(2)} (\begin{pmatrix}\omega\\x\\p^1\\\vdots\\p^K\end{pmatrix})=\left(
\begin{array}{c|c}
  (\omega)_\times & x~  p^1~\cdots~p^n \\ \hline
  0_{1,3}  & \raisebox{-15pt}{{\mbox{{$0_{(K+1)\times(K+1)}$}}}} \\[-4ex]
  \vdots & \\[-0.5ex]
  0_{1,3}  &
\end{array}
\right)$
and  $\frak g=\frak{se}_{K+1}(3)= \left\lbrace  \mathcal{L}_{\frak{se}_{K+1}(3)} (\zeta);  \zeta\in\RR^{3K+6}  \right\rbrace$.  For $\zeta\in\RR^{3K+6} $, by extension of the $SE(3)$ results, we have the closed form:
\begin{align}\label{exp:def:3D}\exp (\zeta)= I_{K+4} + S + \frac{1 - \cos(||\zeta||)}{||\zeta||^2} S^2 + \frac{ ||\zeta|| -\sin(||\zeta||}{||\zeta||^3} S^3\end{align} where  $S =\mathcal{L}_{\frak{se}_{K+1}(3)} (\zeta)$. As easily seen by analogy with $SE(3)$\begin{align}\label{ad:def:3D}Ad_{\Psi(R,x,p^1,\cdots,p^K)} =\left(
\begin{array}{c|c}
  R& 0_{3\times 3}~  \cdots~0_{3\times 3}\\ \hline
  (x)_\times R  & \raisebox{-15pt}{{\mbox{{$\begin{matrix}&&\\R& &\\&\ddots&\\&&R\end{matrix}$}}}} \\[-5ex]
  \vdots & \\[-0.5ex]
  (p^K)_\times R  &
\end{array}
\right)\end{align}
\subsubsection{Link with the dynamical model}Let the state be $X=(R,x,p_1,\cdots,p_K)$, and let $\chi\in G$ be $\chi=\Psi(X)$, and let $\hat X$ and $\hat \chi$ be their estimated counterparts. It is easily seen that up to terms that will disappear in the linearization process anyway, the model  \eqref{eq::bicycle_dyn_3D} for the state is mapped through $\Psi$ defined at \eqref{Psi:map_3D}, to a model of the form  \eqref{eq::general_dynamical_system_Lie}.
\subsubsection{Right-invariant error \eqref{eq::right-error}} It writes $\eta:=\chi\hat\chi^{-1}=\Psi(R\hat R^{T},x-R\hat R^{T}\hat x,\cdots,p^K-R\hat R^{T}\hat p^K)$.
\subsubsection{Linearized error}   Using the matrix logarithm, define $\tilde\omega\in\RR^3$ as the solution of $\exp_m[(\tilde\omega)_\times] =R\hat R^{T}$. Neglecting terms of order $O(\norm{\tilde\omega}^2)$, we have $x-R\hat R^{T}\hat x=x-\hat x-\tilde\omega \times\hat x$. A first order identification   as in Appendix \ref{lnz:sec}    thus yields $\xi=(\tilde\omega,x-\hat x-\tilde\omega \times\hat x,\cdots,p^K-\hat p^K-\tilde\omega \times\hat p^K)$ as a vector that satisfies the definition $\exp(\xi)=\eta$ up to terms of order $O(\norm{\xi}^2$). 
\subsubsection{Linearized error propagation}As in the standard EKF theory, 
the IEKF propagates an estimate obtained after the previous observation $Y_{n-1}$ through the deterministic part of \eqref{eq::bicycle_dyn_3D}, or equivalently \eqref{eq::general_dynamical_system_Lie} in matrix form. Thus, the propagation equation is  given by \eqref{def:A_non_linear_Lie} where $F_n=I_{3K+6}$ and $G_nw_n=Ad_{\hat \chi_{n|n-1}}(w_n)$ as a direct application of the theory.
\subsubsection{Linearized output map} 
To apply Definition \eqref{def:H_non_linear_Lie}, simply note that $R^Tp-\hat R^T\hat p=R^T(p-R\hat R^T\hat p)\approx \hat R^T(p-R\hat R^T\hat p)$. Thus
\begin{equation}
\label{eq::A_H}
H_n  = \begin{pmatrix}
\nabla \tilde h^1 \cdot R \left( \hat \theta_{n|n-1} \right)^T \begin{pmatrix} 0_{3,1} & -I_3 &   I_3 &0_{3,3(K-1)} \end{pmatrix} \\
\vdots \\
\nabla \tilde h^K \cdot R \left( \hat \theta_{n|n-1} \right)^T \begin{pmatrix} 0_{3,1} & -I_3 & 0_{3,3(K-1)} & I_3 \end{pmatrix}
\end{pmatrix},
\end{equation}
where $\nabla \tilde h^k$ is the Jacobian of $\tilde h$ computed at $\hat R_{n|n-1}^T \left( \hat p_{n|n-1}^k - \hat x_{n|n-1} \right)$.

The various steps are gathered in Algorithm \ref{algo::IEKF_SLAM83D}.

\begin{algorithmic}
\begin{algorithm}
\caption{IEKF SLAM: the 3D case}
\label{algo::IEKF_SLAM83D}
\STATE{The state is  defined by $X=(R,x,p^1,\cdots,p^K)\in SO(3)\times\RR^{3+3K}$ and the model is \eqref{eq::bicycle_dyn_3D}-\eqref{eq::bicycle_obs_3D}. Pick an initial estimate $\hat X_0$ with covariance matrix $P_0$.}
\LOOP
\STATE Let $F_n=I_{3K+6}$,  $H_n$ as in \eqref{eq::A_H}, $G_n=Ad_{\psi(\hat X_{n-1|n-1})}$ using \eqref{ad:def:3D}
\STATE Define $Q_n$ by  \eqref{eq::Qn_3D}. $\tilde R_n$ is the observation noise cov. matrix.
\STATE \textbf{Propagation} 
\STATE{ $\hat R_{n|n-1} =\hat  R_{n-1|n-1} \Omega_n$}
\STATE{ $\hat  x_{n|n-1}  =\hat  x_{n-1|n-1}+\hat R_{n-1|n-1} v_n$}
\STATE{ $\hat p_{n|n-1}^j  = \hat p_{n-1|n-1}^j$} for all $1\leq j\leq K$
\STATE $P_{n|n-1} = F_n P_{n-1|n-1} F_n^T + G_n Q_n G_n^T$
\STATE \textbf{Update} 
\STATE $  z_n=Y_n- \begin{pmatrix} 
\tilde h \left[ R_{n|n-1}^T \left( \hat p^1_{n|n-1}- \hat x_{n|n-1} \right) \right] \\
\vdots \\
\tilde h \left[ R_{n|n-1}^T \left( \hat p^K_{n|n-1}- \hat x_{n|n-1} \right) \right] 
\end{pmatrix}$
\STATE $  S_n = H_n P_{n|n-1} H_n^T + \tilde R_n $,
 \STATE $
  K_n = P_{n|n-1} H_n^T S_n^{-1}$
  \STATE $P_{n|n} = [I-K_n H_n]P_{n|n-1} $
\STATE{Compute $\hat \chi_{n|n}=\exp(K_nz_n)\Psi(\hat X_{n|n-1} )$ using \eqref{exp:def:3D} and let $\hat X_{n|n}=\Psi^{-1 }(\hat \chi_{n|n})$. }

\ENDLOOP
\end{algorithm}
\end{algorithmic}


\end{document}